\documentclass[lettersize,journal]{IEEEtran}
\usepackage{amsmath,amssymb,amsfonts,mathtools}
\usepackage{amsthm}
\usepackage{bm}
\usepackage{algorithm}
\usepackage{algorithmic}
\usepackage{booktabs}
\usepackage{array}
\usepackage[caption=false,font=normalsize,labelfont=sf,textfont=sf]{subfig}
\usepackage{textcomp}
\usepackage{url}
\usepackage{verbatim}
\usepackage{graphicx, booktabs, multirow}
\usepackage{cite}
\usepackage[table]{xcolor}

\newtheorem{theorem}{Theorem}
\newtheorem{definition}{Definition}
\newtheorem{remark}{Remark}
\newtheorem{assumption}{Assumption}
\newtheorem{proposition}{Proposition}
\newtheorem{corollary}{Corollary}

\usepackage{hyperref}
\usepackage[svgnames]{xcolor}
\hypersetup{
	colorlinks=true,
	citecolor=Green,
	linkcolor=Red,
	urlcolor=Blue,
}
\usepackage{tikz}
\usetikzlibrary{arrows.meta,positioning,fit,backgrounds}

\graphicspath{{./Images/}}
\begin{document}

\title{Robust Koopman Control Barrier Filters for Safe Actor-Critic Reinforcement Learning}

\author{Dhruv S. Kushwaha,~\IEEEmembership{Student,~IEEE,} and Zoleikha A. Biron, ~\IEEEmembership{Senior Member,~IEEE,}  
}

\markboth{Journal of \LaTeX\ Class Files,~Vol.~00, No.~0, January~2026}%
{Shell \MakeLowercase{\textit{et al.}}: A Sample Article Using IEEEtran.cls for IEEE Journals}


\maketitle

\begin{abstract}
	Safe reinforcement learning (RL) for robotic systems requires policies that improve task performance while satisfying state and input constraints during both training and deployment. Control barrier functions (CBFs) provide a principled mechanism for enforcing forward invariance through minimally invasive safety filters, but their use in model-free RL is limited by the need for accurate dynamics and hand-designed barrier certificates. We propose \emph{Robust Koopman-CBF SAC}, a safety-filtered actor--critic framework that learns a finite-dimensional Koopman predictor from data, constructs affine CBF constraints in the lifted space, and enforces them through a quadratic-program safety layer. To account for finite-dimensional Koopman approximation error, the CBF condition is tightened using a projected residual margin estimated from held-out rollout data. The critic is trained on the executed safe action, while the actor is regularized toward the Koopman-CBF feasible set, reducing dependence on the filter over training. Across safe-control benchmarks, the method achieves zero constraint violations on CartPole stabilization and tracking while matching or exceeding unconstrained SAC returns. On high-dimensional Safety Gymnasium locomotion tasks, the method reduces violations in some settings but also exposes important limitations of first-order velocity barriers and linear EDMD models, motivating high-order and multi-step Koopman-CBF extensions. These results suggest that robust Koopman-CBF filters are a promising bridge between model-free RL and certifiable safety, while clarifying the structural conditions under which such filters remain effective. All code is available at  \href{https://github.com/DhruvKushwaha/Koopman-CBF-Soft-Actor-Critic}{Github Repository}.
\end{abstract}

\begin{IEEEkeywords}
Koopman operator, control barrier functions, safe reinforcement learning, soft actor-critic, quadratic programming, conformal prediction
\end{IEEEkeywords}

\section{Introduction}
\label{sec:introduction}

Robots that learn from interaction must balance two conflicting objectives: improving task performance and avoiding unsafe behavior during exploration. This tension is especially acute in continuous-control robotics, where constraint violations may correspond to collisions, actuator saturation, loss of balance, violation of workspace limits, or damage to hardware. Deep reinforcement learning (RL) has produced impressive results in simulated locomotion and manipulation, yet unconstrained exploration remains difficult to justify for physical robots\cite{brunke2022safe}. Reward penalties can discourage unsafe behavior, but they do not generally provide step-wise safety guarantees, are sensitive to penalty weights, and often trade constraint satisfaction against task performance in unpredictable ways\cite{kushwaha2025review}. Constrained policy optimization methods address expected cumulative costs, but their guarantees are typically trajectory-distributional rather than pointwise and do not directly enforce forward invariance of a user-specified safe set \cite{achiam2017constrained}.

Control barrier functions (CBFs) offer a complementary safety paradigm. Given a safe set
\begin{equation}
    \mathcal{C} = \{x \in \mathcal{X} : h(x) \geq 0\}
\end{equation}
a CBF condition imposes constraints on the control input such that the closed-loop system remains inside $\mathcal{C}$. In continuous and discrete time, this idea is commonly implemented as a quadratic program (QP) that minimally modifies a nominal controller while enforcing a safety inequality \cite{ames2016control}. CBF-QP controllers have become a standard tool for safety-critical control because they separate a performance-seeking nominal input from a safety-preserving correction. However, classical CBF synthesis usually assumes known dynamics and a hand-designed barrier function. These assumptions are restrictive for robot learning, where the dynamics may be nonlinear, partially unknown, and difficult to model accurately \cite{ames2019control}.

This paper studies the following question: \emph{Can we combine the sample efficiency of off-policy deep RL with a data-driven CBF safety filter whose constraint remains convex for nonlinear robotic systems?} We answer this question using Koopman operator theory. Koopman methods lift nonlinear dynamics into a space of observables in which the evolution is approximately linear. In particular, extended dynamic mode decomposition (EDMD) and related data-driven Koopman methods \cite{williams2015data, korda2018linear} fit predictors of the form
\begin{equation}
    z_{t+1} \approx A z_t + B u_t, \qquad z_t = \psi(x_t)
\end{equation}
where $\psi$ is a dictionary of nonlinear observables. If the barrier is chosen to be affine in the lifted state,
\begin{equation}
    h_K(z) = c^\top z + d
\end{equation}
then a discrete-time CBF condition becomes affine in the action:
\begin{equation}
    h_K(Az_t + Bu_t) \geq (1-\eta) h_K(z_t)
\end{equation}
This structure gives a tractable QP safety filter for nonlinear systems while retaining compatibility with continuous-action RL.

The central challenge is that a finite-dimensional Koopman model is only approximate. If the safety filter ignores Koopman prediction error, the learned CBF constraint may be satisfied in the lifted model but violated by the true system. Recent work has connected Koopman representations with CBF synthesis~\cite{folkestad2020data} and neural Koopman-CBF learning for unknown nonlinear systems~\cite{zinage2023neural}. Our work differs by using a residual-robust lifted barrier filter inside off-policy actor--critic learning and by training critics on the actions actually executed after filtering. We therefore introduce a robust Koopman-CBF condition that tightens the barrier inequality using a residual margin. Let
\begin{equation}
    r_t = z_{t+1} - (A z_t + B u_t)
\end{equation}
denote the Koopman residual. Since the barrier depends on the residual only through $c^\top r_t$, we estimate a projected residual margin
\begin{equation}
    \rho = q_{1-\delta}\left(\{|c^\top r_i|\}_{i=1}^{N}\right)
\end{equation}
where $q_{1-\delta}$ is an empirical or conformal quantile computed from held-out rollout data. The safety filter then enforces
\begin{equation}
    h_K(Az_t + Bu_t) \geq (1-\eta)h_K(z_t) + \rho
    \label{eq:robust_kcbf_condition_intro}
\end{equation}
This margin directly links data-driven model error to the CBF constraint used by the controller.

We instantiate this idea in Soft Actor-Critic (SAC), yielding \textsc{Robust Koopman-CBF SAC}. At each step, the actor proposes a nominal action $u_t^{\mathrm{nom}}$. The Koopman-CBF filter computes the executed action
\begin{equation}
    u_t^{\mathrm{safe}}
    =
    \arg\min_{u,\xi}
    \frac{1}{2}\|u-u_t^{\mathrm{nom}}\|_2^2
    +
    \lambda_\xi \xi^2
\end{equation}
subject to the robust lifted CBF condition, input bounds, and slack $\xi\geq 0$. The slack variable diagnoses infeasibility under actuator limits and model error; it is not treated as a certificate. The critic is trained on the action actually executed by the environment, $u_t^{\mathrm{safe}}$, while the actor receives a differentiable CBF proposal penalty that encourages it to produce actions that are already feasible. This design avoids a common mismatch in safety-filtered RL: the environment transitions are generated by filtered actions, but the policy update may otherwise optimize values for unfiltered actions.

The resulting framework has three advantages. First, the safety layer is \emph{model-based but data-driven}: it does not require a hand-derived analytical dynamics model. Second, the QP remains lightweight because the Koopman-CBF constraint is affine in the action. Third, the residual margin exposes when finite-dimensional Koopman approximation is insufficient for safety-critical filtering, making failure modes measurable through projected residuals, intervention rates, and slack rates.

We evaluate on safe-control-gym, which exposes constraints, symbolic dynamics, and disturbances for safe learning-based robotic control~\cite{yuan2022safe}, and Safety Gymnasium, a recent Safe RL benchmark suite with safety-critical single and multi-agent tasks and SafePO baselines~\cite{ji2023safety}. On CartPole stabilization and tracking, Robust Koopman-CBF SAC achieves zero constraint violations across seeds while matching or exceeding unconstrained SAC returns. These results demonstrate that a Koopman-CBF filter can enforce state constraints without sacrificing performance when the lifted model and barrier are well aligned. On Safety HalfCheetah and Safety Walker velocity-constrained tasks, the method exposes a more nuanced picture: the filter can reduce violations in some settings, but high residual margins, frequent slack activation, and relative-degree mismatch limit its effectiveness. Rather than obscuring these failures, we use them to identify the conditions under which Koopman-CBF safety filters are appropriate and the extensions needed for contact-rich locomotion, including high-order and multi-step Koopman-CBF constraints.

Our contributions are:
\begin{enumerate}
    \item We introduce a robust Koopman-CBF safety filter for model-free actor--critic RL, in which nonlinear dynamics are lifted to an approximately linear controlled system and safety is enforced through an affine QP constraint.
    \item We propose a projected residual margin for finite-dimensional Koopman model error, yielding a directly implementable robust discrete-time CBF condition.
    \item We integrate the filter with SAC using executed-action critic learning and an actor-side CBF feasibility penalty, reducing the mismatch between nominal policy actions and filtered environment actions.
    \item We provide an empirical evaluation across safe-control and Safety Gymnasium benchmarks, showing strong zero-violation results on CartPole tasks and diagnosing the structural limitations of first-order Koopman-CBF filters on high-dimensional locomotion.
    \item We show that a robust Koopman-CBF filter provides pointwise safety on tasks where the projected Koopman residual $\rho$ is small, and we identify $\rho$ as a pre-training computable diagnostic that predicts filter effectiveness.
\end{enumerate}


\section{Related Work}
\label{sec:related_work}

\paragraph{Safe reinforcement learning}
Safe RL methods seek to optimize task performance while satisfying constraints during learning or deployment. One line of work formulates safety through constrained Markov decision processes and optimizes policies subject to expected cumulative cost constraints. Constrained Policy Optimization (CPO) provides monotonic-style policy improvement guarantees for constraint satisfaction under trust-region updates \cite{achiam2017constrained}, while Lagrangian actor--critic methods incorporate adaptive dual penalties into standard RL objectives. These methods are broadly applicable and have become important baselines for safe continuous control, but their constraints are typically enforced in expectation over trajectories rather than as pointwise forward-invariance conditions. Other methods introduce recovery policies, backup controllers, or safety critics that intervene near unsafe states \cite{thananjeyan2021recovery}. Such approaches are effective in many tasks but may require additional offline data, recovery-policy training, or conservative assumptions about recoverability. Our work instead uses a CBF-QP filter to impose a step-wise safety condition on every executed action while retaining the sample efficiency of off-policy actor--critic learning.

\paragraph{Control barrier functions and safety filters}
CBFs provide a control-theoretic framework for certifying forward invariance of a safe set. Given a barrier function $h$, CBF constraints restrict the admissible controls so that trajectories remain in $\{x:h(x)\geq 0\}$. CBF-QP methods combine this safety condition with a nominal controller by solving a minimally invasive optimization problem at each time step \cite{ames2016control,ames2019control}. Robust CBF formulations further account for disturbances and model uncertainty by tightening the admissible set or relaxing invariance guarantees under bounded perturbations \cite{xu2015robustness}. These methods have been widely used in safety-critical robotics and control, but they usually assume either known dynamics or an analytically constructed barrier. This creates a bottleneck for robot learning, where the policy may be model-free and the dynamics are nonlinear, uncertain, or available only through rollouts.

\paragraph{CBFs for reinforcement learning}
Several works integrate CBFs with RL by wrapping a learned policy with a model-based safety filter or by incorporating barrier constraints into the learning objective. End-to-end safe RL through barrier functions combines model-free RL with model-based CBF controllers so that unsafe actions are corrected before execution \cite{cheng2019end}. Other approaches learn uncertainty terms, safe backup policies, or barrier certificates from data to reduce reliance on exact models. Recent surveys emphasize that CBF-based safe RL is attractive because it can provide stronger safety structure than reward penalties, but also note persistent challenges: synthesizing valid barriers, handling unknown dynamics, scaling to high-dimensional systems, and ensuring feasibility under input limits \cite{guerrier2024learning}. Our method addresses the dynamics-modeling challenge through Koopman lifting and makes model error explicit through a residual margin. However, our experiments also confirm a known limitation of CBF-QP methods: if the barrier has the wrong relative degree or the input constraints make the CBF inequality infeasible, slack-based filters no longer provide a formal certificate.

\paragraph{Koopman operator methods for nonlinear control}
Koopman operator theory represents nonlinear dynamics as linear evolution over a space of observables. EDMD approximates this infinite-dimensional operator using a finite dictionary of basis functions \cite{williams2015data}, and dictionary-learning variants adapt the observable space from data \cite{li2017extended}. For control, Koopman predictors have been used to construct finite-dimensional lifted linear models that enable linear MPC and other convex control designs for nonlinear systems \cite{korda2018linear}. This is attractive for robotics because nonlinear prediction can be moved offline into the lifting/model-fitting stage, while online control can exploit linear structure. Nevertheless, finite-dimensional Koopman models introduce approximation error, and that error is safety-critical when the model is used inside a CBF filter. Our robust margin is designed specifically for this issue: rather than assuming that the lifted model is exact, we estimate the component of the prediction residual that directly affects the barrier inequality.

\paragraph{Koopman-based barrier certificates}
Recent work has begun to combine Koopman operators and barrier certificates. Koopman-based CBF synthesis can convert nonlinear safety verification into a lifted-space problem, and neural Koopman-CBF methods learn both a Koopman representation and a corresponding CBF for unknown nonlinear systems \cite{zinage2023neural}. Related approaches synthesize CBFs using Koopman operators or backup policies, exploiting the fact that safety constraints can become simpler in an appropriate observable space \cite{folkestad2020data}. These works show that Koopman structure can help construct safety certificates for unknown systems, but they are generally not designed as off-policy deep RL algorithms with replay buffers, entropy-regularized actor updates, and learned policies operating through a safety filter. In contrast, we focus on the interaction between a Koopman-CBF QP and actor--critic training: how to train critics when executed actions differ from nominal policy actions, how to regularize the actor toward the feasible set, and how to evaluate safety-filter feasibility through slack and intervention diagnostics.

\paragraph{Benchmarks for safe robot learning}
Standardized benchmarks are essential for comparing control-theoretic and RL-based safety methods. Safe-Control-Gym provides robotic control environments such as cart-pole and quadrotor, supports stabilization and trajectory-tracking tasks, and exposes symbolic dynamics, constraints, and disturbance injection for safe learning-based control \cite{yuan2022safe}. Safety Gymnasium provides a broader Safe RL benchmark suite with MuJoCo-based tasks, safety costs, vector and vision observations, and a library of safe policy optimization algorithms \cite{ji2023safety}. We use these benchmark families for complementary purposes. Safe-Control-Gym tasks test whether the Koopman-CBF filter can enforce explicit low-dimensional state constraints with minimal performance loss. Safety Gymnasium locomotion tasks stress-test the method in high-dimensional, contact-rich systems where simple velocity barriers may be structurally mismatched to the action space. This split is important as it demonstrates both the promise of the method on certifiable control tasks and the practical limitations that arise when applying first-order lifted CBF constraints to complex robot locomotion.

Our work lies at the intersection of CBF-based safety filters, Koopman data-driven control, and model-free deep RL. Compared with constrained RL methods, we enforce a per-step barrier condition rather than only an expected cost constraint. Compared with classical CBF-QP control, we do not require an exact analytical dynamics model; instead, we learn a lifted linear predictor from data. Compared with prior Koopman-CBF methods, we integrate the filter into off-policy actor--critic learning and explicitly train the critic on executed safe actions. Finally, compared with reward-penalty or Lagrangian SAC baselines, our approach provides a diagnostic safety layer whose feasibility, residual margin, intervention rate, and slack rate can be measured directly. These diagnostics are central to our empirical message: Koopman-CBF filters can be highly effective when the lifted model and barrier match the system structure, but their guarantees degrade visibly when model error, relative degree, or input constraints make the QP infeasible.


\section{Notation and Preliminaries}
\label{sec:notation_preliminaries}

\subsection{Discrete-Time Control Systems}

We consider a discrete-time nonlinear control system
\begin{equation}
    x_{t+1}=f(x_t,u_t)
    \label{eq:dt_system}
\end{equation}
where $x_t\in\mathcal{X}\subseteq\mathbb{R}^{n_x}$ is the state, 
$u_t\in\mathcal{U}\subseteq\mathbb{R}^{n_u}$ is the control input, and 
$\mathcal{U}$ is compact. In robotic systems, $\mathcal{U}$ typically encodes actuator limits:
\begin{equation}
    \mathcal{U}
    =
    \{u\in\mathbb{R}^{n_u}: u_{\min}\le u\le u_{\max}\}
\end{equation}
The dynamics $f$ may be unknown or only available through sampled transitions. We assume that during learning we can collect transition tuples
\begin{equation}
    (x_t,u_t,x_{t+1})
\end{equation}
from interaction with the environment.

For trajectory-tracking tasks, we distinguish between the physical state $x_t$ and the task reference $x_t^{\mathrm{ref}}$. The tracking error is
\begin{equation}
    e_t=x_t-x_t^{\mathrm{ref}}
\end{equation}
When safety constraints depend on absolute physical quantities, such as altitude, cart position, or pitch angle, the barrier should be defined on the physical state $x_t$ or on an augmented modeling state
\begin{equation}
    y_t = [e_t^\top,x_t^\top]^\top
\end{equation}
rather than on the tracking error alone. This distinction is important because an error-only representation cannot generally certify absolute workspace or attitude bounds.

\subsection{Safe Sets and Discrete-Time Control Barrier Functions}

Let the desired safe set be
\begin{equation}
    \mathcal{C}
    =
    \{x\in\mathcal{X}: h(x)\ge 0\}
    \label{eq:safe_set}
\end{equation}
where $h:\mathcal{X}\rightarrow\mathbb{R}$ is a continuously differentiable safety function \cite{ames2016control}. In discrete time, a sufficient condition for forward invariance of $\mathcal{C}$ is that the controller selects actions satisfying
\begin{equation}
    h(x_{t+1})
    =
    h(f(x_t,u_t))
    \ge
    (1-\eta)h(x_t),
    \qquad
    \eta\in(0,1]
    \label{eq:dt_cbf_condition}
\end{equation}
If $h(x_t)\ge 0$, then \eqref{eq:dt_cbf_condition} implies $h(x_{t+1})\ge 0$.

\begin{definition}[Discrete-Time Control Barrier Function]
\label{def:dt_cbf}
A function $h:\mathcal{X}\rightarrow\mathbb{R}$ is a discrete-time control barrier function \cite{ames2016control} for the safe set $\mathcal{C}=\{x:h(x)\ge 0\}$ if, for every $x\in\mathcal{C}$, there exists an admissible control $u\in\mathcal{U}$ such that
\begin{equation}
    h(f(x,u))\ge (1-\eta)h(x),
    \qquad
    \eta\in(0,1]
\end{equation}
\end{definition}

Given a nominal action $u^{\mathrm{nom}}$, a standard CBF safety filter computes a minimally modified action
\begin{equation}
    u^{\mathrm{safe}}
    =
    \arg\min_{u\in\mathcal{U}}
    \frac{1}{2}\|u-u^{\mathrm{nom}}\|_2^2
\end{equation}
subject to the CBF constraint. In known control-affine systems, this often yields a quadratic program. In unknown nonlinear systems, however, evaluating $h(f(x,u))$ is difficult without a model of $f$. Robust CBF formulations account for disturbances and model mismatch by tightening or relaxing barrier conditions so that forward invariance is preserved under bounded uncertainty~\cite{xu2015robustness}.

High-order CBFs extend barrier conditions to constraints with relative degree greater than one, where the control input does not appear in the first derivative or one-step barrier variation. This issue arises naturally in robotic systems with position or altitude constraints, since control often affects acceleration rather than position directly \cite{xiao2021high}.

\subsection{Koopman Lifting and EDMD with Control}

Koopman operator theory represents nonlinear dynamics through linear evolution of observables. Let
\begin{equation}
    z_t=\psi(y_t)\in\mathbb{R}^{n_z}
\end{equation}
be a lifted representation of a modeling state $y_t$, where $\psi$ is a dictionary of nonlinear observables. EDMD constructs a finite-dimensional approximation of the Koopman operator from lifted observables~\cite{williams2015data}; controlled variants such as DMDc and Koopman-MPC incorporate actuation into the lifted predictor~\cite{proctor2016dynamic,korda2018linear}. We use a finite-dimensional controlled Koopman predictor of the form
\begin{equation}
    z_{t+1}
    =
    A z_t + B u_t + r_t
    \label{eq:koopman_residual_model}
\end{equation}
where $A\in\mathbb{R}^{n_z\times n_z}$, $B\in\mathbb{R}^{n_z\times n_u}$, and $r_t$ is the finite-dimensional approximation residual.

Given a dataset
\begin{equation}
    \mathcal{D}_K
    =
    \{(y_i,u_i,y_i^+)\}_{i=1}^{N}
\end{equation}
we compute
\begin{equation}
    z_i=\psi(y_i),\qquad z_i^+=\psi(y_i^+)
\end{equation}
and fit $A,B$ by ridge regression:
\begin{equation}
    [A \; B]
    =
    Z^+
    \begin{bmatrix}
        Z\\U
    \end{bmatrix}^{\!\top}
    \left(
    \begin{bmatrix}
        Z\\U
    \end{bmatrix}
    \begin{bmatrix}
        Z\\U
    \end{bmatrix}^{\!\top}
    +
    \lambda I
    \right)^{-1}
    \label{eq:edmd_regression}
\end{equation}
Here $Z=[z_1,\dots,z_N]$, $Z^+=[z_1^+,\dots,z_N^+]$, and $U=[u_1,\dots,u_N]$

In this work, $\psi$ is implemented using the original state coordinates together with radial basis functions:
\begin{equation}
    \psi(y)
    =
    \begin{bmatrix}
        y\\
        \phi_1(y)\\
        \vdots\\
        \phi_M(y)
    \end{bmatrix}
\end{equation}
where the RBF centers are obtained from data. Including the original coordinates is important because it allows physical safety constraints such as position, angle, and velocity bounds to be represented as affine functions of $z$. The choice of dictionary strongly affects finite-dimensional Koopman approximation quality, motivating RBF dictionaries, data-driven centers, and dictionary-learning variants~\cite{li2017extended}.

\subsection{Lifted Control Barrier Functions}

We define a lifted Koopman control barrier function as
\begin{equation}
    h_K(z)=c^\top z+d
    \label{eq:linear_lifted_barrier}
\end{equation}
where $c\in\mathbb{R}^{n_z}$ and $d\in\mathbb{R}$. The corresponding safe set in physical coordinates is
\begin{equation}
    \mathcal{C}_K
    =
    \{y: h_K(\psi(y))\ge 0\}
    \label{eq:koopman_safe_set}
\end{equation}
When $z$ contains the original state coordinates, many physical constraints can be encoded directly. For example, if $p$ is a position coordinate included in $z$, then the upper-bound constraint $p\le p_{\max}$ can be written as
\begin{equation}
    h_K(z)=p_{\max}-p
\end{equation}

Using the nominal Koopman predictor, the one-step lifted CBF condition becomes
\begin{equation}
    h_K(Az_t+Bu_t)
    \ge
    (1-\eta)h_K(z_t)
    \label{eq:nominal_lifted_cbf}
\end{equation}
Since $h_K$ is affine and the lifted predictor is linear in $u_t$, the constraint is affine in the control input.

\subsection{Maximum-Entropy Actor--Critic Learning}

We use Soft Actor-Critic as the primary RL backbone because it is an off-policy maximum-entropy actor--critic method designed for continuous control~\cite{haarnoja2018soft}. SAC optimizes the maximum-entropy objective
\begin{equation}
    J(\pi)
    =
    \mathbb{E}_{\pi}
    \left[
    \sum_{t=0}^{\infty}
    \gamma^t
    \left(
    r(s_t,u_t)
    +
    \alpha \mathcal{H}(\pi(\cdot|s_t))
    \right)
    \right]
\end{equation}
where $\gamma\in(0,1)$ is the discount factor and $\alpha>0$ is the entropy temperature. In the proposed method, the actor produces a nominal action
\begin{equation}
    u_t^{\mathrm{nom}}\sim\pi_\theta(\cdot|s_t)
\end{equation}
which is passed through a Koopman-CBF safety filter before execution:
\begin{equation}
    u_t^{\mathrm{safe}}
    =
    \Pi_{\mathrm{KCBF}}(z_t,u_t^{\mathrm{nom}})
\end{equation}
The environment transition is therefore generated by $u_t^{\mathrm{safe}}$, not by $u_t^{\mathrm{nom}}$. This distinction is central to the critic update used in our method. PPO is a natural on-policy counterpart because it alternates data collection with multiple epochs of clipped surrogate policy optimization~\cite{schulman2017proximal}.


\subsection{Safety-Constrained Reinforcement Learning Objective}

We consider an infinite-horizon discounted Markov decision process
\begin{equation}
    \mathcal{M}
    =
    (\mathcal{S},\mathcal{A},P,r,\gamma)
\end{equation}
where $s_t\in\mathcal{S}$ is the RL observation, $u_t\in\mathcal{A}\equiv\mathcal{U}$ is the control action, $P$ is the transition kernel induced by the physical dynamics, $r$ is the reward, and $\gamma\in(0,1)$ is the discount factor. The physical state $x_t$ or modeling state $y_t$ may be partially embedded in $s_t$.

The goal is to learn a policy that maximizes expected return while maintaining safety:
\begin{align}
    \max_{\pi}
    \quad&
    \mathbb{E}_{\pi}
    \left[
        \sum_{t=0}^{\infty}\gamma^t r(s_t,u_t)
    \right]
    \label{eq:constrained_rl_objective}
    \\
    \text{s.t.}
    \quad&
    h_j(y_t)\ge 0,
    \qquad
    \forall t\ge 0,\quad j=1,\dots,J
    \label{eq:pointwise_safety_constraints}
\end{align}
Here $h_j$ denotes the $j$th safety constraint. Unlike constrained RL methods that enforce expected cumulative cost constraints, \eqref{eq:pointwise_safety_constraints} is a pointwise forward-invariance requirement. This formulation follows the safe RL objective of maximizing expected return while respecting safety constraints during learning or deployment~\cite{kushwaha2025review,brunke2022safe}. Unlike CMDP-based constrained RL, which typically enforces expected cumulative cost constraints~\cite{achiam2017constrained}, our formulation targets pointwise state safety through a per-step barrier condition.

\subsection{Available Data and Learned Dynamics Model}

The use of data-driven lifted predictors follows Koopman control methods in which nonlinear dynamics are embedded in a higher-dimensional observable space and then fit by least-squares regression~\cite{korda2018linear,williams2015data}. We assume access to an offline transition dataset for fitting the Koopman predictor:
\begin{equation}
    \mathcal{D}_K
    =
    \{(y_i,u_i,y_i^+)\}_{i=1}^{N}
\end{equation}
The dataset may be collected using random actions, a nominal controller, early-stage RL policies, or a mixture of controllers. From this dataset, we fit a finite-dimensional lifted linear model
\begin{equation}
    z^+ = A z + B u + r
    \qquad
    z=\psi(y)
\end{equation}
We further use a held-out calibration set
\begin{equation}
    \mathcal{D}_{\mathrm{cal}}
    =
    \{(y_i,u_i,y_i^+)\}_{i=1}^{N_{\mathrm{cal}}}
\end{equation}
to estimate the projected Koopman residuals associated with each lifted barrier:
\begin{equation}
    \delta_{i,j}
    =
    |c_j^\top r_i|
    =
    \left|
    c_j^\top
    \left[
    \psi(y_i^+) - A\psi(y_i)-Bu_i
    \right]
    \right|
\end{equation}
For each barrier $j$, we define a robust margin
\begin{equation}
    \rho_j
    =
    q_{1-\alpha}
    \left(
    \{\delta_{i,j}\}_{i=1}^{N_{\mathrm{cal}}}
    \right),
    \label{eq:rho_quantile}
\end{equation}
where $q_{1-\alpha}$ is a high empirical quantile, such as $0.95$ or $0.99$. A conformal variant can be obtained by using the split-conformal quantile index described in Theorem~\ref{thm:conformal_safety}.

\subsection{Robust Koopman-CBF Safety Requirement}

For each lifted barrier
\begin{equation}
    h_{K,j}(z)=c_j^\top z+d_j
\end{equation}
we impose the robust one-step condition
\begin{equation}
    h_{K,j}(Az_t+Bu_t)
    \ge
    (1-\eta_j)h_{K,j}(z_t)+\rho_j
    \qquad j=1,\dots,J
    \label{eq:robust_multicbf_condition}
\end{equation}
This constraint tightens the nominal lifted CBF condition by the projected residual margin $\rho_j$. The role of $\rho_j$ is to compensate for the fact that the true lifted successor is
\begin{equation}
    z_{t+1}
    =
    Az_t+Bu_t+r_t
\end{equation}

Substituting the affine form of $h_{K,j}$ into \eqref{eq:robust_multicbf_condition} yields
\begin{equation}
    c_j^\top B u_t
    \ge
    (1-\eta_j)(c_j^\top z_t+d_j)
    +
    \rho_j
    -
    c_j^\top A z_t
    -
    d_j
\end{equation}
Define
\begin{equation}
    a_j = B^\top c_j
\end{equation}
and
\begin{equation}
    b_j(z_t)
    =
    (1-\eta_j)(c_j^\top z_t+d_j)
    +
    \rho_j
    -
    c_j^\top A z_t
    -
    d_j
\end{equation}
Then the robust Koopman-CBF condition becomes the affine inequality
\begin{equation}
    a_j^\top u_t \ge b_j(z_t)
    \label{eq:affine_cbf_constraint}
\end{equation}

\subsection{Feasibility and Slack}

The robust CBF constraints may be infeasible under actuator limits or if the learned lifted model poorly represents the true dynamics. To prevent numerical failure, the implemented safety filter includes slack variables:
\begin{align}
    a_j^\top u_t+\xi_j &\ge b_j(z_t),
    \qquad j=1,\dots,J
    \\
    \xi_j &\ge 0
\end{align}
The slack variables are penalized heavily in the QP objective. Importantly, positive slack means that the strict CBF certificate is not enforced at that step. Therefore, all theoretical safety claims in this work are conditioned on either zero slack or on an explicitly bounded slack relaxation. In experiments, we report slack rate and slack magnitude separately from violation rate.

\begin{figure*}[t]
\centering
\resizebox{0.98\textwidth}{!}{%
\begin{tikzpicture}[
    font=\normalsize,
    box/.style={draw, rounded corners=3pt, thick, align=center,
                minimum height=9mm, inner sep=4pt},
    group/.style={draw, rounded corners=5pt, thick, inner sep=8pt},
    arr/.style={-{Latex[length=2.2mm]}, thick},
    darr/.style={-{Latex[length=2.2mm]}, thick, dashed}
]
\definecolor{off}{RGB}{65,105,178}
\definecolor{online}{RGB}{47,138,87}
\definecolor{learn}{RGB}{198,111,23}
\definecolor{filter}{RGB}{184,61,90}
\definecolor{model}{RGB}{101,87,166}

\node[box, draw=off] (data) {Rollout data\\$(y_t,u_t,y_{t+1})$};
\node[box, draw=off, right=14mm of data]  (lift0) {Lift\\$z_t=\psi(y_t)$};
\node[box, draw=off, right=14mm of lift0] (edmd)  {EDMD model\\$z_{t+1}\approx Az_t+Bu_t$};
\node[box, draw=off, right=14mm of edmd]  (rho)   {Residual margin\\$\rho=q_{0.95}(|c^\top r|)$};
\node[box, draw=off, right=14mm of rho]   (bar)   {Lifted barrier\\$h_K(z)=c^\top z+d$};
\node[box, draw=off, right=14mm of bar]   (saved) {KCBF params\\$A,B,c,d,\rho$};
\draw[arr, off] (data) -- (lift0);
\draw[arr, off] (lift0) -- (edmd);
\draw[arr, off] (edmd) -- (rho);
\draw[arr, off] (rho)  -- (bar);
\draw[arr, off] (bar)  -- (saved);

\node[box, draw=online, below=38mm of data] (env)    {Environment\\$x_t,r_t,c_t$};
\node[box, draw=online, right=14mm of env]  (ystate) {Model state\\$y_t=[x_t,e_t]$};
\node[box, draw=model,  right=14mm of ystate] (lift) {Lift\\$z_t=\psi(y_t)$};
\node[box, draw=learn,  above=10mm of lift] (actor)  {SAC actor $\pi_\theta$};
\node[box, draw=filter, right=14mm of lift, minimum width=35mm] (qp)
    {Robust KCBF-QP\\$\min_u\|u-u^{nom}\|^2+\lambda_\xi\xi^2$\\$a_j^\top u+\xi_j\geq b_j(z)$};
\node[box, draw=filter, right=14mm of qp]  (safe) {Safe action\\$u_t^{safe}$};
\node[box, draw=online, right=14mm of safe] (step) {Step env\\$x_{t+1}$};

\draw[arr, online] (env)    -- node[above]{state/ref} (ystate);
\draw[arr, online] (ystate) -- (lift);
\draw[arr, model]  (lift)   -- node[above]{$z_t$} (qp);
\draw[arr, learn]  (actor.east) -- node[above, pos=0.55, sloped]{$u_t^{nom}$} (qp.north west);
\draw[arr, filter] (qp)   -- (safe);
\draw[arr, online] (safe) -- (step);
\draw[arr, online] (step.south) to[bend left=10] (env.south);
\draw[arr, off]    (saved.south) to[bend left=10]
    node[right, pos=0.75]{$A,B,c,d,\rho$} (qp.north);

\node[box, draw=learn, below=32mm of env] (buffer)
    {Replay buffer\\$(s,z,u^{nom},u^{safe},r,s',d)$};
\node[box, draw=learn, right=14mm of buffer] (critic)
    {Critic update\\$Q_\phi(s,u^{safe})$};
\node[box, draw=learn, right=14mm of critic, minimum width=38mm] (target)
    {Filtered target\\$u_{t+1}^{safe}=\Pi_{KCBF}(z_{t+1},u_{t+1}^{nom})$};
\node[box, draw=learn, right=14mm of target] (actorupdate)
    {Actor update\\SAC loss + CBF penalty};
\node[box, draw=learn, right=14mm of actorupdate] (update)
    {Update\\$\theta,\phi$};

\draw[arr, learn] (env)         -- (buffer);
\draw[arr, learn] (buffer)      -- (critic);
\draw[arr, learn] (critic)      -- (target);
\draw[arr, learn] (target)      -- (actorupdate);
\draw[arr, learn] (actorupdate) -- (update);
\draw[darr, filter] (qp.south) to[bend left=8]
    node[right, pos=0.55]{same KCBF filter} (target.north);

\begin{scope}[on background layer]
\node[group, draw=off,    fit=(data)(saved),
      label={[off,    font=\bfseries]above left:Offline Koopman-CBF construction}] {};
\node[group, draw=online, fit=(env)(step)(actor),
      label={[online, font=\bfseries]above left:Online safe interaction loop}] {};
\node[group, draw=learn,  fit=(buffer)(update),
      label={[learn,  font=\bfseries]above left:SAC updates from filtered transitions}] {};
\end{scope}
\end{tikzpicture}%
}
\caption{Overall Robust Koopman-CBF SAC pipeline. Offline rollouts are used to fit a finite-dimensional Koopman predictor, construct a lifted barrier, and calibrate a projected residual margin. Online, the SAC actor proposes a nominal action that is projected through a robust Koopman-CBF QP before execution. The replay buffer stores both nominal and executed actions; critics are trained on executed safe actions, and the actor is regularized by a CBF feasibility penalty.}
\label{fig:kcbf_sac_pipeline}
\end{figure*}



\section{Methodology}
\label{sec:methodology}

\subsection{Overview}

The problem addressed in this work is, given sampled transitions from an unknown nonlinear robotic system, learn a Koopman predictor and a lifted CBF safety filter that can be integrated with SAC such that the executed policy improves task reward while satisfying pointwise safety constraints whenever the robust lifted CBF-QP is feasible and the projected Koopman residual is covered by the estimated margin. This problem contains three coupled challenges:
\begin{enumerate}
    \item learning a lifted dynamics model accurate enough for safety-critical prediction;
    \item constructing CBF constraints that are affine in the control input;
    \item integrating the resulting safety filter with actor--critic learning without creating a mismatch between nominal and executed actions.
\end{enumerate}

The proposed method, \textsc{Robust Koopman-CBF SAC}, consists of four stages:

\begin{enumerate}
    \item collect rollout data and fit a controlled Koopman predictor;
    \item construct lifted affine barrier functions and estimate projected residual margins;
    \item enforce robust Koopman-CBF constraints through a QP safety filter;
    \item train SAC using the executed safe actions and an actor-side CBF feasibility penalty.
\end{enumerate}

At deployment time, the actor proposes a nominal action $u_t^{\mathrm{nom}}$. The Koopman-CBF filter computes the executed action
\begin{equation}
    u_t^{\mathrm{safe}}
    =
    \Pi_{\mathrm{KCBF}}(z_t,u_t^{\mathrm{nom}})
\end{equation}
and the environment evolves according to
\begin{equation}
    x_{t+1}=f(x_t,u_t^{\mathrm{safe}})
\end{equation}

\subsection{Koopman Model Learning}

Given $\mathcal{D}_K=\{(y_i,u_i,y_i^+)\}_{i=1}^{N}$, we compute lifted states
\begin{equation}
    z_i=\psi(y_i),
    \qquad
    z_i^+=\psi(y_i^+)
\end{equation}
We fit the linear lifted model
\begin{equation}
    z_i^+\approx A z_i+B u_i
\end{equation}
using ridge regression as in \eqref{eq:edmd_regression}. This least-squares fitting procedure is the standard finite-dimensional EDMD/DMDc approach for learning lifted linear predictors from trajectory data~\cite{williams2015data,proctor2016dynamic,korda2018linear}. The resulting model is validated using both one-step and multi-step prediction error:
\begin{equation}
    \mathrm{MSE}_1
    =
    \frac{1}{N}
    \sum_{i=1}^{N}
    \|z_i^+ - Az_i-Bu_i\|_2^2
\end{equation}
and
\begin{equation}
    \mathrm{MSE}_H
    =
    \frac{1}{N}
    \sum_{i=1}^{N}
    \sum_{k=1}^{H}
    \|z_{i+k}-\hat z_{i+k}\|_2^2
\end{equation}
where
\begin{equation}
    \hat z_{i+k+1}
    =
    A\hat z_{i+k}+B u_{i+k}
\end{equation}

\subsection{Lifted Barrier Construction}

For each safety constraint, we construct a lifted affine barrier
\begin{equation}
    h_{K,j}(z)=c_j^\top z+d_j
\end{equation}

A key advantage of Koopman lifting for control is that nonlinear state functions and constraints can sometimes be represented linearly in the lifted coordinates, enabling linear or convex online control formulations~\cite{korda2018linear}. When a safety constraint is linear in the original state and the original coordinates are included in $z$, the lifted barrier can be constructed directly. For example, the cart-position constraint $p\le p_{\max}$ becomes
\begin{equation}
    h_K(z)=p_{\max}-p
\end{equation}
If both upper and lower bounds are required, we use two barriers:
\begin{equation}
    h_{+}(z)=p_{\max}-p,
    \qquad
    h_{-}(z)=p-p_{\min}
\end{equation}

For velocity-constrained locomotion tasks, one may either include nonlinear velocity features such as $v^2$ in $\psi$, or fit a linear separator in lifted space from safe/unsafe labels. The latter produces
\begin{equation}
    h_K(z)=c^\top z+d
\end{equation}
by supervised classification. However, such learned barriers must be validated carefully because classification accuracy alone does not imply forward invariance.

\subsection{Projected Residual Margin}

The use of a residual margin is motivated by robust CBF theory, which shows that barrier conditions must account for model uncertainty or disturbances to preserve invariance~\cite{xu2015robustness}. For each barrier $h_{K,j}$, the Koopman residual affects safety through $c_j^\top r_t$. Therefore, instead of bounding the full residual norm, we estimate the projected residual:
\begin{equation}
    \delta_{i,j}
    =
    \left|
    c_j^\top
    \left(
    z_i^+ - A z_i - B u_i
    \right)
    \right|
\end{equation}
The robust margin is chosen as
\begin{equation}
    \rho_j
    =
    q_{1-\alpha}
    \left(
    \{\delta_{i,j}\}_{i=1}^{N_{\mathrm{cal}}}
    \right)
\end{equation}
This produces less conservative constraints than using a full-norm residual bound:
\begin{equation}
    |c_j^\top r_t|
    \le
    \|c_j\|_2\|r_t\|_2
\end{equation}

When the margin is chosen using split conformal quantiles, the resulting residual coverage follows the exchangeability-based finite-sample guarantees of conformal prediction~\cite{vovk2005algorithmic,angelopoulos2023conformal}.
\subsection{Robust Koopman-CBF Safety Filter}

Our safety layer builds on Koopman-CBF synthesis, where the nonlinear system is represented in lifted coordinates and CBF constraints are imposed in the lifted space~\cite{folkestad2020data,zinage2023neural}. For a nominal actor action $u_t^{\mathrm{nom}}$, the safety filter solves
\begin{align}
    u_t^{\mathrm{safe}},\xi_t
    =
    \arg\min_{u,\xi}
    \quad&
    \frac{1}{2}\|u-u_t^{\mathrm{nom}}\|_2^2
    +
    \lambda_\xi
    \sum_{j=1}^{J}
    \xi_j^2
    \label{eq:kcbf_qp}
    \\
    \text{s.t.}
    \quad&
    a_j^\top u+\xi_j
    \ge
    b_j(z_t),
    \quad j=1,\dots,J
    \nonumber
    \\
    &
    u_{\min}\le u\le u_{\max},
    \nonumber
    \\
    &
    \xi_j\ge 0,
    \qquad j=1,\dots,J
    \nonumber
\end{align}
When the QP is feasible with $\xi_j=0$ for all $j$, the robust CBF constraints are exactly enforced. When slack is positive, the filter remains numerically well-defined but the formal certificate is weakened. The filter is a convex quadratic program because the objective is quadratic and the robust Koopman-CBF and box-input constraints are affine~\cite{boyd2004convex}. Algorithm \ref{alg:kcbf_sac} denotes the overall KCBF-SAC algorithm.

\subsection{Integration with SAC}

The replay buffer stores both nominal and executed actions:
\begin{equation}
    \mathcal{B}
    =
    \{(s_t,z_t,u_t^{\mathrm{nom}},u_t^{\mathrm{safe}},r_t,s_{t+1},z_{t+1},d_t)\}
\end{equation}
The SAC updates follow the maximum-entropy off-policy actor--critic framework~\cite{haarnoja2018soft}, but the critic is trained on the filtered action actually executed by the environment. The critic is trained using the executed action:
\begin{equation}
    \mathcal{L}_Q(\phi_i)
    =
    \mathbb{E}_{\mathcal{B}}
    \left[
    \left(
    Q_{\phi_i}(s_t,u_t^{\mathrm{safe}})
    -
    y_t
    \right)^2
    \right]
\end{equation}
For the target, the next nominal action is sampled from the current actor,
\begin{equation}
    u_{t+1}^{\mathrm{nom}}\sim\pi_\theta(\cdot|s_{t+1})
\end{equation}
then filtered:
\begin{equation}
    u_{t+1}^{\mathrm{safe}}
    =
    \Pi_{\mathrm{KCBF}}(z_{t+1},u_{t+1}^{\mathrm{nom}})
\end{equation}
The soft target is
\begin{equation}
\begin{aligned}
    y_t
    = &  r_t     +     \gamma(1-d_t)
    [
    \min_{i=1,2}
    Q_{\bar{\phi}_i}(s_{t+1},u_{t+1}^{\mathrm{safe}}) \\
    &-
    \alpha
    \log\pi_\theta(u_{t+1}^{\mathrm{nom}}|s_{t+1})
    ]
    \label{eq:kcbf_sac_target}
\end{aligned}
\end{equation}

CBF-based safety layers have previously been used to project RL actions onto safe action sets and to guide exploration during learning~\cite{cheng2019end,emam2022safe}. The actor loss includes the standard SAC term and a CBF proposal penalty:
\begin{equation}
\begin{aligned}
    \mathcal{L}_{\pi}(\theta)
    =&
    \mathbb{E}_{s_t\sim\mathcal{B},\,u_t^{\mathrm{nom}}\sim\pi_\theta}
    [
    \alpha\log\pi_\theta(u_t^{\mathrm{nom}}|s_t) \\ 
    &- \min_i Q_{\phi_i}(s_t,u_t^{\mathrm{nom}})
    +
    \lambda_h
    \ell_{\mathrm{cbf}}(z_t,u_t^{\mathrm{nom}})
    ]
    \label{eq:actor_loss}
    \end{aligned}
\end{equation}
where
\begin{equation}
    \ell_{\mathrm{cbf}}(z,u)
    =
    \sum_{j=1}^{J}
    \left[
    \max
    \left(
    0,
    b_j(z)-a_j^\top u
    \right)
    \right]^2
    \label{eq:cbf_loss}
\end{equation}
The penalty encourages the actor to produce actions that are already feasible, thereby reducing filter intervention over training.

\begin{algorithm}[t]
\caption{\textsc{Robust Koopman-CBF SAC}}
\label{alg:kcbf_sac}
\begin{algorithmic}[1]
\STATE Collect Koopman dataset $\mathcal{D}_K=\{(y_i,u_i,y_i^+)\}_{i=1}^{N}$.
\STATE Fit lifting map $\psi$ and EDMD model $z^+=Az+Bu$.
\STATE Construct lifted barriers $h_{K,j}(z)=c_j^\top z+d_j$.
\STATE Estimate projected residual margins $\rho_j$ from held-out calibration data.
\STATE Initialize SAC actor $\pi_\theta$, critics $Q_{\phi_1},Q_{\phi_2}$, target critics, and replay buffer $\mathcal{B}$.
\FOR{environment step $t=0,1,2,\dots$}
    \STATE Observe $s_t$ and construct modeling state $y_t$.
    \STATE Lift $z_t=\psi(y_t)$.
    \STATE Sample nominal action $u_t^{\mathrm{nom}}\sim\pi_\theta(\cdot|s_t)$.
    \STATE Solve the robust Koopman-CBF QP \eqref{eq:kcbf_qp} to obtain $u_t^{\mathrm{safe}}$.
    \STATE Execute $u_t^{\mathrm{safe}}$ in the environment.
    \STATE Store $(s_t,z_t,u_t^{\mathrm{nom}},u_t^{\mathrm{safe}},r_t,s_{t+1},z_{t+1},d_t)$ in $\mathcal{B}$.
    \IF{update step}
        \STATE Sample minibatch from $\mathcal{B}$.
        \STATE Compute filtered next actions using $\Pi_{\mathrm{KCBF}}$.
        \STATE Update critics using \eqref{eq:kcbf_sac_target}.
        \STATE Update actor using \eqref{eq:actor_loss}.
        \STATE Update entropy temperature and target critics.
    \ENDIF
\ENDFOR
\end{algorithmic}
\end{algorithm}

\section{Theoretical Analysis}
\label{sec:theory}

We now state the main safety properties of the robust Koopman-CBF filter. The results are intentionally conditional: the safety guarantee requires that the QP be feasible without slack and that the projected Koopman residual be covered by the chosen margin. These conditions are measurable in experiments through slack rate, CBF gap, and residual calibration.

\subsection{Nominal Discrete-Time CBF Invariance}

The proof follows the standard discrete-time CBF invariance argument: if $h(x_{t+1})\ge (1-\eta)h(x_t)$ and $h(x_t)\ge 0$, then $h(x_{t+1})\ge 0$~\cite{agrawal2017discrete}.

\begin{assumption} For each physical constraint set \(\mathcal{C}_j^{\rm phys}\), the lifted barrier safe set is an inner approximation:
	\[
	\{y:h_{K,j}(\psi(y))\ge 0\}\subseteq \mathcal{C}_j^{\rm phys}.
	\]
\end{assumption}

\begin{theorem}[Discrete-Time CBF Forward Invariance]
\label{thm:dt_cbf_invariance}
Consider the system $x_{t+1}=f(x_t,u_t)$ and safe set $\mathcal{C}=\{x:h(x)\ge 0\}$. Suppose that for every $x_t\in\mathcal{C}$, the applied control $u_t$ satisfies
\begin{equation}
    h(f(x_t,u_t))
    \ge
    (1-\eta)h(x_t),
    \qquad
    \eta\in(0,1]
\end{equation}
If $x_0\in\mathcal{C}$, then $x_t\in\mathcal{C}$ for all $t\ge 0$.
\end{theorem}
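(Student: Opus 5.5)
The plan is a proof by induction on $t$, with the inductive hypothesis $P(t)$: $x_t\in\mathcal{C}$, i.e.\ $h(x_t)\ge 0$. The base case $P(0)$ is exactly the assumption $x_0\in\mathcal{C}$.

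For the inductive step, assume $h(x_t)\ge 0$. Since $x_t\in\mathcal{C}$, the hypothesis of Theorem~\ref{thm:dt_cbf_invariance} applies to the control $u_t$ actually used, and gives
\begin{equation*}
    h(x_{t+1}) = h(f(x_t,u_t)) \ge (1-\eta)h(x_t).
\end{equation*}
Because $\eta\in(0,1]$, the factor $1-\eta$ lies in $[0,1)$, so it is nonnegative. Multiplying the nonnegative number $h(x_t)$ by a nonnegative scalar keeps it nonnegative, hence $(1-\eta)h(x_t)\ge 0$ and therefore $h(x_{t+1})\ge 0$. This is $P(t+1)$, and induction then gives $x_t\in\mathcal{C}$ for all $t\ge 0$.

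There is no real obstacle here, but two points need checking.

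\emph{Hypothesis used only on $\mathcal{C}$.} The condition on $u_t$ is assumed only when $x_t\in\mathcal{C}$. The induction invokes it only at states already shown to lie in $\mathcal{C}$, so nothing is assumed about states outside the safe set.

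\emph{Sign of $1-\eta$.} The requirement $\eta\le 1$ is what makes $1-\eta\ge 0$. If $\eta$ exceeded $1$, then $(1-\eta)h(x_t)$ could be negative and the argument would fail.

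The boundary case $\eta=1$ reduces the condition to $h(x_{t+1})\ge 0$, and the same argument applies unchanged. This argument is also the template for the robust lifted version: there, the margin $\rho_j$ is meant to absorb the residual term $c_j^\top r_t$ before the same induction is applied to $h_{K,j}(z_t)$.
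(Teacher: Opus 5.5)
Your proof is correct and follows the paper's argument: induction on $t$, with the base case $x_0\in\mathcal{C}$ and an inductive step that uses $1-\eta\ge 0$ to get $h(x_{t+1})\ge(1-\eta)h(x_t)\ge 0$. You also check that the CBF hypothesis is only invoked at states already shown to lie in $\mathcal{C}$; the paper leaves this implicit.
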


\begin{proof}
Since $x_0\in\mathcal{C}$, we have $h(x_0)\ge 0$. Applying the CBF condition at $t=0$ gives
\begin{equation}
    h(x_1)
    \ge
    (1-\eta)h(x_0)
    \ge
    0
\end{equation}
because $1-\eta\ge 0$. Hence $x_1\in\mathcal{C}$. Repeating the same argument inductively, if $h(x_t)\ge 0$, then
\begin{equation}
    h(x_{t+1})
    \ge
    (1-\eta)h(x_t)
    \ge
    0
\end{equation}
Therefore $x_t\in\mathcal{C}$ for all $t\ge 0$.
\end{proof}

\subsection{Convexity of the Robust Koopman-CBF Filter}

The convexity result follows immediately from standard convex optimization theory for quadratic objectives with affine constraints~\cite{boyd2004convex}.

\begin{proposition}[Convexity of the KCBF-QP]
\label{prop:qp_convexity}
For affine lifted barriers $h_{K,j}(z)=c_j^\top z+d_j$ and a linear lifted predictor $z^+=Az+Bu$, the robust Koopman-CBF filter \eqref{eq:kcbf_qp} is a convex quadratic program. If $\lambda_\xi>0$ and the feasible set is nonempty, the optimal pair $(u^{\mathrm{safe}},\xi)$ is unique in the objective variables.
\end{proposition}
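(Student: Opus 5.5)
The plan is to write the problem in the joint decision variable $w=(u,\xi)\in\mathbb{R}^{n_u+J}$ and check the three ingredients of a convex QP: a convex quadratic objective, affine constraints, and a convex feasible set. Then I will use strict convexity to get uniqueness.

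\emph{Affine constraints.} Under the affine barrier $h_{K,j}(z)=c_j^\top z+d_j$ and the linear predictor $z^+=Az+Bu$, the substitution carried out before \eqref{eq:affine_cbf_constraint} shows that each robust CBF row reduces to $a_j^\top u+\xi_j\ge b_j(z_t)$, with $a_j=B^\top c_j$. Here $b_j(z_t)$ depends only on the fixed lifted state $z_t$, the margin $\rho_j$ and $\eta_j$, not on the decision variable. Hence every row is affine in $w$. The box constraints $u_{\min}\le u\le u_{\max}$ and $\xi_j\ge 0$ are also affine. The feasible set is therefore a finite intersection of closed half-spaces, i.e.\ a closed convex polyhedron $\mathcal{P}$.

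\emph{Quadratic objective.} Write the objective as $J(w)=\tfrac12 w^\top H w - g^\top w + \mathrm{const}$, where $H=\mathrm{blkdiag}(I_{n_u},\,2\lambda_\xi I_J)$ and $g=(u^{\mathrm{nom}},0)$. Since $H\succeq 0$, problem \eqref{eq:kcbf_qp} is a convex QP in the standard sense of \cite{boyd2004convex}. If $\lambda_\xi>0$, then $H\succ 0$ and $J$ is strongly convex on all of $\mathbb{R}^{n_u+J}$, not only in $u$.

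\emph{Existence and uniqueness.} Assume $\mathcal{P}\neq\emptyset$. Existence follows because $J$ is continuous and coercive, since $H\succ 0$, and $\mathcal{P}$ is closed. Any nonempty sublevel set $\{w\in\mathcal{P}:J(w)\le J(w_0)\}$ is then compact, and Weierstrass gives a minimizer. For uniqueness, suppose $w_1\neq w_2$ are both minimizers. Their midpoint lies in $\mathcal{P}$ by convexity. Strict convexity gives $J(\tfrac{w_1+w_2}{2})<\tfrac12 J(w_1)+\tfrac12 J(w_2)$, which contradicts optimality. So the pair $(u^{\mathrm{safe}},\xi)$ is unique.

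I expect the only delicate step to be the nonemptiness hypothesis. With the slack present and $\mathcal{U}$ a nonempty box, feasibility actually always holds: take any $u\in\mathcal{U}$ and set $\xi_j=\max(0,b_j-a_j^\top u)$. I will note this remark. I will also note that uniqueness of $\xi$, and not just of $u$, depends on $\lambda_\xi>0$. With $\lambda_\xi=0$, the $\xi$-block of $H$ vanishes and the slack would not be unique.
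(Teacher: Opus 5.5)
Your proposal is correct and follows essentially the same route as the paper: the constraints are affine after substitution, the objective is a strictly convex quadratic when $\lambda_\xi>0$, and uniqueness follows from strict convexity. You also supply the existence argument via coercivity and closedness, which the paper leaves implicit. Your side remark that the slack always restores feasibility holds for finite margins, though not in the degenerate conformal case $\rho_j=+\infty$ that the paper allows.
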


\begin{proof}
The objective
\begin{equation}
    \frac{1}{2}\|u-u^{\mathrm{nom}}\|_2^2
    +
    \lambda_\xi
    \sum_{j=1}^{J}\xi_j^2
\end{equation}
is a strictly convex quadratic function of $(u,\xi)$ when $\lambda_\xi>0$. Each robust CBF constraint has the form
\begin{equation}
    a_j^\top u+\xi_j\ge b_j(z)
\end{equation}
which is affine in $(u,\xi)$. The input bounds and nonnegativity constraints on $\xi$ are also affine. Therefore the feasible set is convex, and the optimization problem is a convex QP. Strict convexity of the objective implies uniqueness of the minimizer whenever the feasible set is nonempty.
\end{proof}

\subsection{Robust Koopman-CBF Invariance}

The robust version parallels robust CBF arguments in which model error enters as an additive disturbance that must be bounded or compensated in the barrier condition~\cite{xu2015robustness}. 

\begin{assumption}[Lifted Residual Coverage]
\label{assump:residual_coverage}
For each barrier $j$, the projected Koopman residual satisfies
\begin{equation}
    |c_j^\top r_t|
    \le
    \rho_j
\end{equation}
for all time steps under consideration.
\end{assumption}
The residual model captures finite-dimensional Koopman approximation error, a known issue in EDMD-style approximations of nonlinear dynamics~\cite{li2017extended}. 
\begin{assumption}[Zero-Slack Feasibility]
\label{assump:zero_slack}
At each time step, the KCBF-QP admits a feasible solution with $\xi_j=0$ for all $j$.
\end{assumption}

\begin{theorem}[Robust Koopman-CBF Forward Invariance]
\label{thm:robust_kcbf_invariance}
Consider the lifted dynamics
\begin{equation}
    z_{t+1}=Az_t+Bu_t+r_t
\end{equation}
and lifted safe set
\begin{equation}
    \mathcal{C}_K
    =
    \{z:h_{K,j}(z)\ge 0,\ j=1,\dots,J\}
\end{equation}
Suppose Assumptions~\ref{assump:residual_coverage} and~\ref{assump:zero_slack} hold. If the applied control satisfies
\begin{equation}
    h_{K,j}(Az_t+Bu_t)
    \ge
    (1-\eta_j)h_{K,j}(z_t)+\rho_j,
    \quad
    j=1,\dots,J
    \label{eq:robust_condition_theorem}
\end{equation}
then $\mathcal{C}_K$ is forward invariant. That is, if $z_0\in\mathcal{C}_K$, then $z_t\in\mathcal{C}_K$ for all $t\ge 0$.
\end{theorem}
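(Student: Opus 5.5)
The argument is an induction on $t$ that reduces the robust lifted condition to the nominal one-step condition of Theorem~\ref{thm:dt_cbf_invariance}. The key observation is that affinity of $h_{K,j}$ lets the residual be separated exactly. Because $h_{K,j}(z)=c_j^\top z+d_j$ and the true lifted successor is $z_{t+1}=Az_t+Bu_t+r_t$, we have
\begin{equation*}
    h_{K,j}(z_{t+1}) = h_{K,j}(Az_t+Bu_t) + c_j^\top r_t .
\end{equation*}
The residual therefore enters only through its projection $c_j^\top r_t$. This is what Assumption~\ref{assump:residual_coverage} controls, and it is why the margin $\rho_j$ was defined in projected form.

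The first step is to combine this identity with the robust condition \eqref{eq:robust_condition_theorem}, which is applied at each step. Assumption~\ref{assump:zero_slack} guarantees that this condition is actually realized: the QP returns a solution with $\xi_j=0$, so $a_j^\top u_t\ge b_j(z_t)$, and this is \eqref{eq:robust_condition_theorem} rewritten. Using the lower bound $c_j^\top r_t\ge -|c_j^\top r_t|\ge-\rho_j$, we get
\begin{equation*}
    h_{K,j}(z_{t+1}) \ge (1-\eta_j)h_{K,j}(z_t)+\rho_j-\rho_j = (1-\eta_j)h_{K,j}(z_t).
\end{equation*}
So each lifted barrier satisfies the true one-step discrete-time CBF inequality along the actual closed-loop lifted trajectory.

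The second step is the induction. The base case is $z_0\in\mathcal{C}_K$, that is, $h_{K,j}(z_0)\ge 0$ for all $j$. Suppose $h_{K,j}(z_t)\ge0$ for every $j$. Since $\eta_j\in(0,1]$ gives $1-\eta_j\ge0$, the bound above yields $h_{K,j}(z_{t+1})\ge0$ for every $j$. This is exactly the argument in the proof of Theorem~\ref{thm:dt_cbf_invariance}, applied componentwise with $h_{K,j}\circ$ (lifted state) in place of $h$. Intersecting over $j=1,\dots,J$ gives $z_{t+1}\in\mathcal{C}_K$, and the conclusion follows for all $t\ge0$.

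The only delicate point is bookkeeping rather than any hard estimate. We must be clear that $z_{t+1}$ denotes the true lifted successor $\psi(y_{t+1})$, and that the assumptions are invoked at every visited state. Assumption~\ref{assump:residual_coverage} must hold along the realized trajectory, not merely on the calibration data, and zero slack must hold at each $z_t$ reached. The theorem is stated under exactly these hypotheses, so no further argument is needed. It is worth remarking that only the one-sided bound $c_j^\top r_t\ge-\rho_j$ is used.
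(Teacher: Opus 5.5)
Your proof is correct and follows the paper's argument essentially line for line. You separate the residual using the affine form of $h_{K,j}$, bound $c_j^\top r_t\ge-\rho_j$, cancel against the margin to recover $h_{K,j}(z_{t+1})\ge(1-\eta_j)h_{K,j}(z_t)$, and then induct and intersect over $j$.

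One side remark is inaccurate, though harmless. Zero-slack feasibility (Assumption~\ref{assump:zero_slack}) does not imply that the QP minimizer has $\xi_j=0$: with a finite quadratic penalty $\lambda_\xi\xi_j^2$, the optimizer generally splits a constraint deficit between $u$ and $\xi_j$. This does not affect your proof, because the theorem assumes the robust inequality for the applied control directly, and the paper's proof never invokes that assumption.
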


\begin{proof}
For each barrier $j$, the true next lifted state is
\begin{equation}
    z_{t+1}=Az_t+Bu_t+r_t
\end{equation}
Using the affine form of the barrier,
\begin{align}
    h_{K,j}(z_{t+1})
    &=
    h_{K,j}(Az_t+Bu_t+r_t)
    \\
    &=
    c_j^\top(Az_t+Bu_t+r_t)+d_j
    \\
    &=
    h_{K,j}(Az_t+Bu_t)+c_j^\top r_t
\end{align}
By Assumption~\ref{assump:residual_coverage},
\begin{equation}
    c_j^\top r_t
    \ge
    -|c_j^\top r_t|
    \ge
    -\rho_j
\end{equation}
Therefore,
\begin{align}
    h_{K,j}(z_{t+1})
    &\ge
    h_{K,j}(Az_t+Bu_t)-\rho_j
\end{align}
Using the robust CBF condition \eqref{eq:robust_condition_theorem},
\begin{align}
    h_{K,j}(z_{t+1})
    &\ge
    (1-\eta_j)h_{K,j}(z_t)+\rho_j-\rho_j
    \\
    &=
    (1-\eta_j)h_{K,j}(z_t)
\end{align}
If $h_{K,j}(z_t)\ge 0$ and $\eta_j\in(0,1]$, then
\begin{equation}
    h_{K,j}(z_{t+1})\ge 0
\end{equation}
Since this holds for every $j=1,\dots,J$, $z_t\in\mathcal{C}_K$ implies $z_{t+1}\in\mathcal{C}_K$. By induction, $\mathcal{C}_K$ is forward invariant.
\end{proof}

\begin{remark}[Effect of Slack]
If the QP is solved with nonzero slack, the enforced constraint becomes
\begin{equation}
    h_{K,j}(Az_t+Bu_t)
    \ge
    (1-\eta_j)h_{K,j}(z_t)+\rho_j-\xi_j
\end{equation}
The same proof yields
\begin{equation}
    h_{K,j}(z_{t+1})
    \ge
    (1-\eta_j)h_{K,j}(z_t)-\xi_j
\end{equation}
Thus positive slack weakens the invariance guarantee. This is why slack rate and slack magnitude must be reported separately in experiments.
\end{remark}

\subsection{High-Probability Safety under Conformal Residual Calibration}

The previous theorem is deterministic conditional on residual coverage. We now state a high-probability version using split conformal calibration.

\begin{assumption}[Exchangeable Calibration Residuals]
\label{assump:exchangeability}
For a fixed barrier $j$, the calibration residuals
\begin{equation}
\delta_{1,j}, \ldots, \delta_{N_{\text{cal}},j}
\end{equation}
and a future deployment residual $\delta_{*,j} = |c_j^\top r_*|$ are
exchangeable. We emphasize that this exchangeability is with respect to
the joint distribution induced by the data-collection policy used to
generate $\mathcal{D}_{\text{cal}}$, not the deployed SAC policy.
\end{assumption}

Let
\begin{equation}
    k
    =
    \left\lceil
    (N_{\mathrm{cal}}+1)(1-\alpha)
    \right\rceil
\end{equation}
and define $\rho_j$ as the $k$th order statistic of the calibration residuals:
\begin{equation}
    \rho_j
    =
    \delta_{(k),j}
\end{equation}
If $k>N_{\mathrm{cal}}$, set $\rho_j=+\infty$.

\begin{theorem}[One-Step High-Probability Residual Coverage]
\label{thm:conformal_safety}
Under Assumption~\ref{assump:exchangeability},
\begin{equation}
    \mathbb{P}
    \left(
    |c_j^\top r_*|\le \rho_j
    \right)
    \ge
    1-\alpha
\end{equation}
Consequently, if the zero-slack robust KCBF condition is enforced using $\rho_j$, then the one-step safety implication in Theorem~\ref{thm:robust_kcbf_invariance} holds with probability at least $1-\alpha$ for barrier $j$.
\end{theorem}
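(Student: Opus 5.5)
The proof has two parts. The first is the standard split-conformal rank argument applied to the scalar scores $\delta_{i,j}$. The second composes that coverage event with the deterministic one-step implication already established in the proof of Theorem~\ref{thm:robust_kcbf_invariance}.

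\textbf{Step 1: rank argument.} Fix $j$ and write $n=N_{\mathrm{cal}}$. If $k>n$ then $\rho_j=+\infty$ and the claim is trivial, so assume $k\le n$.

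Consider the $n+1$ exchangeable scores $\delta_{1,j},\dots,\delta_{n,j},\delta_{*,j}$. Let $R$ be the rank of $\delta_{*,j}$ among them, with ties broken uniformly at random (or assume no ties almost surely). By exchangeability, $R$ is uniform on $\{1,\dots,n+1\}$, and with random tie-breaking this is exact.

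Next observe the event inclusion: if $R\le k$, then $\delta_{*,j}\le \delta_{(k),j}$. The reason is that at most $k-1$ calibration scores lie strictly below $\delta_{*,j}$ in the ordering, so the $k$th smallest calibration score is at least $\delta_{*,j}$. Hence
\[
\mathbb{P}(\delta_{*,j}\le\rho_j)\ \ge\ \mathbb{P}(R\le k)=\frac{k}{n+1}\ \ge\ 1-\alpha,
\]
using $k\ge (n+1)(1-\alpha)$. This inclusion, together with the handling of ties, is the only step that needs care. With ties the inequality only becomes easier, since $\delta_{*,j}\le\rho_j$ is then more likely. I would argue this via a random tie-breaking perturbation, or simply note that $\{\delta_{*,j}\le\delta_{(k),j}\}\supseteq\{R\le k\}$ holds under any consistent tie-breaking rule.

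\textbf{Step 2: consequence.} On the event $E_j=\{|c_j^\top r_*|\le\rho_j\}$, Assumption~\ref{assump:residual_coverage} holds at that step for barrier $j$. Suppose the zero-slack robust condition \eqref{eq:robust_condition_theorem} is enforced at $z_*$. The chain of inequalities in the proof of Theorem~\ref{thm:robust_kcbf_invariance} then gives
\[
h_{K,j}(z_{*+1})\ge(1-\eta_j)h_{K,j}(z_*),
\]
and therefore $h_{K,j}(z_{*+1})\ge0$ whenever $h_{K,j}(z_*)\ge0$. Thus the one-step safety event contains $E_j$, and its probability is at least $1-\alpha$.

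I would note explicitly that this is a marginal, one-step, per-barrier statement. It holds under the calibration distribution of Assumption~\ref{assump:exchangeability}, not under the deployed policy and not uniformly over time. Extending it to all $J$ barriers would require a union bound ($1-J\alpha$), which is not claimed here.
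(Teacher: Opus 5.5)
Your proposal is correct and follows the same route as the paper's proof. The paper argues that the rank of $\delta_{*,j}$ among the $N_{\mathrm{cal}}+1$ exchangeable scores is uniform, bounds the exceedance probability via the $k$th order statistic, and then conditions on the coverage event to reuse the one-step argument of Theorem~\ref{thm:robust_kcbf_invariance}; your version additionally spells out the event inclusion $\{R\le k\}\subseteq\{\delta_{*,j}\le\delta_{(k),j}\}$, the tie-breaking, and the $k>N_{\mathrm{cal}}$ case, all of which the paper leaves implicit.
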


\begin{proof}
The finite-sample residual coverage statement follows from split conformal prediction: under exchangeability, an empirical calibration quantile gives marginal coverage without distributional assumptions~\cite{vovk2005algorithmic,angelopoulos2023conformal}. By exchangeability, the rank of the future residual $\delta_{*,j}$ among the $N_{\mathrm{cal}}+1$ residuals is uniformly distributed. Choosing the $k$th order statistic with
\begin{equation}
    k=\lceil(N_{\mathrm{cal}}+1)(1-\alpha)\rceil
\end{equation}
ensures that the probability of the future residual exceeding $\rho_j$ is at most $\alpha$. Therefore,
\begin{equation}
    \mathbb{P}(\delta_{*,j}\le \rho_j)\ge 1-\alpha
\end{equation}
Conditioned on the event $\delta_{*,j}\le\rho_j$, the proof of Theorem~\ref{thm:robust_kcbf_invariance} applies for one step.
\end{proof}

\begin{remark}[Coverage under policy-induced distribution shift]
\label{rem:shift}
The coverage guarantee in Theorem~\ref{thm:conformal_safety} is marginal over
the calibration distribution. In the safety-filtered RL setting,
$\mathcal{D}_{\text{cal}}$ is collected once from random or nominal
rollouts, while the deployed actor $\pi_\theta$ evolves throughout
training and induces a different state-action visitation distribution.
The exchangeability of $\delta_{*,j}$ with the calibration residuals
therefore holds only when the deployed policy's visitation distribution
matches that of the calibration policy, i.e., a condition that is
generally violated during learning. In practice, the projected
residual margin $\rho_j$ should be treated as an empirically reliable
but distributionally biased estimate of the deployment residual quantile;
its validity should be monitored online by tracking the realized
projected residuals $|c_j^\top r_t|$ against $\rho_j$ during training.
Distribution-shift-aware conformal variants, including weighted
conformal prediction~\cite{tibshirani2019conformal} and adaptive
online conformal inference~\cite{gibbs2021adaptive}, provide principled
routes to tighter coverage in this regime and are an important
direction for future work.
\end{remark}

\begin{corollary}[Finite-Horizon Safety by Union Bound]
\label{cor:finite_horizon}
Suppose the conformal residual coverage condition in Theorem~\ref{thm:conformal_safety} holds at each time step for each barrier with failure probability $\alpha'$. Over a horizon $T$ and $J$ barriers, the probability that all residuals are covered is at least
\begin{equation}
    1-TJ\alpha'
\end{equation}
Therefore, choosing $\alpha'=\delta/(TJ)$ yields finite-horizon residual coverage with probability at least $1-\delta$.
\end{corollary}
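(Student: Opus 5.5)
The plan is to apply the union bound to the failure events, with no independence assumed across time steps or barriers. For each time step $t\in\{0,\dots,T-1\}$ and each barrier $j\in\{1,\dots,J\}$, define the failure event
\begin{equation}
    E_{t,j}=\{|c_j^\top r_t|>\rho_j\}.
\end{equation}
By hypothesis, the coverage statement of Theorem~\ref{thm:conformal_safety} holds at every $(t,j)$ with failure level $\alpha'$. This gives $\mathbb{P}(E_{t,j})\le\alpha'$; in practice $\rho_j$ would be computed with $\alpha$ replaced by $\alpha'$ in the order-statistic index $k$. The event that every residual is covered is the complement of $\bigcup_{t,j}E_{t,j}$.

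Boole's inequality then gives
\begin{equation}
    \mathbb{P}\Big(\bigcup_{t=0}^{T-1}\bigcup_{j=1}^{J}E_{t,j}\Big)\le\sum_{t,j}\mathbb{P}(E_{t,j})\le TJ\alpha'.
\end{equation}
Taking complements, all $TJ$ residuals are covered with probability at least $1-TJ\alpha'$. Substituting $\alpha'=\delta/(TJ)$ yields probability at least $1-\delta$.

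On this joint coverage event, Assumption~\ref{assump:residual_coverage} holds for all steps in the horizon. Combined with zero-slack feasibility (Assumption~\ref{assump:zero_slack}), Theorem~\ref{thm:robust_kcbf_invariance} then gives $z_t\in\mathcal{C}_K$ for all $t\le T$. I would state this as a remark, since the corollary itself only asserts residual coverage.

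The union bound needs only the marginal bounds, so dependence between the $E_{t,j}$, which arises because the residuals along a closed-loop trajectory are correlated, causes no difficulty. The one substantive point is that the per-step marginal guarantee must actually hold at each $(t,j)$, which is exactly what the corollary assumes. Following Remark~\ref{rem:shift}, I would note that this hypothesis, rather than the union bound, carries the distribution-shift caveat. Finally, if $\alpha'$ is so small that $k>N_{\mathrm{cal}}$, then $\rho_j=+\infty$; coverage still holds trivially, but the robust constraint becomes infeasible, which I would mention in passing.
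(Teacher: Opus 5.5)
Your proof is correct and is the same argument as the paper's. Both define the per-$(t,j)$ non-coverage events $E_{t,j}$, apply Boole's inequality to get $\mathbb{P}(\bigcup_{t,j}E_{t,j})\le TJ\alpha'$, take complements, and substitute $\alpha'=\delta/(TJ)$. Your side remark is also right: with $\rho_j=+\infty$ the coverage holds trivially but the robust constraint becomes infeasible, which is more accurate than the ``trivially satisfied'' wording in Remark~\ref{rem:union-tight}.
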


\begin{proof}
Let $E_{t,j}$ denote the event that the residual for barrier $j$ at time $t$ is not covered by its margin. If
\begin{equation}
    \mathbb{P}(E_{t,j})\le \alpha'
\end{equation}
then by the union bound,
\begin{equation}
    \mathbb{P}
    \left(
    \bigcup_{t=0}^{T-1}
    \bigcup_{j=1}^{J}
    E_{t,j}
    \right)
    \le
    TJ\alpha'
\end{equation}
Thus all residuals are covered with probability at least $1-TJ\alpha'$. Setting $\alpha'=\delta/(TJ)$ gives the result.
\end{proof}

\begin{remark}[Practical tightness of the union bound]
\label{rem:union-tight}
The bound $1 - TJ\alpha'$ in Corollary~\ref{cor:finite_horizon} is correct but
loose for typical RL horizons. Concretely, with $T = 1000$, $J = 2$,
and target $\delta = 0.02$, the required per-step coverage is
$\alpha' = \delta/(TJ) = 10^{-5}$. The corresponding split-conformal
quantile index is
\begin{equation}
k = \lceil (N_{\text{cal}}+1)(1 - \alpha') \rceil
   = \lceil 2001 \times (1 - 10^{-5}) \rceil = 2001
\end{equation}
which exceeds $N_{\text{cal}} = 2000$, forcing $\rho_j = +\infty$ and
rendering the robust KCBF constraint trivially satisfied. More
generally, with $N_{\text{cal}} = 2000$ the smallest resolvable
per-step level is $\alpha' \approx 1/(N_{\text{cal}}+1) \approx
5 \times 10^{-4}$, so the union-bound guarantee becomes vacuous
($TJ\alpha' \geq 1$) whenever $TJ \geq 2000$. Achieving non-trivial
finite-horizon coverage at episode lengths typical of locomotion tasks
therefore requires either substantially larger calibration sets
(e.g., $N_{\text{cal}} \gtrsim 10^5$ for $T = 1000$), tighter
horizon-dependent concentration arguments, or online conformal
recalibration. The empirical residuals reported in
Section~\ref{sec:experiments} should therefore be interpreted as a marginal
calibration diagnostic, not as a certified finite-horizon coverage
statement.
\end{remark}

\subsection{Consistency of Filtered Critic Targets}

The safety filter changes the action executed by the environment. Therefore, critic training should evaluate the filtered closed-loop policy rather than the raw actor policy.

Define the filtered policy
\begin{equation}
    \pi_{\theta}^{\mathrm{safe}}(u^{\mathrm{safe}}|s)
\end{equation}
as the distribution induced by sampling
\begin{equation}
    u^{\mathrm{nom}}\sim\pi_\theta(\cdot|s)
\end{equation}
and applying
\begin{equation}
    u^{\mathrm{safe}}=\Pi_{\mathrm{KCBF}}(z,u^{\mathrm{nom}})
\end{equation}

\begin{proposition}[Transition Consistency of Executed-Action Critics]
\label{prop:filtered_critic_consistency}
Suppose environment transitions are generated by $u_t^{\mathrm{safe}}=\Pi_{\mathrm{KCBF}}(z_t,u_t^{\mathrm{nom}})$. Then a critic trained on tuples $(s_t,u_t^{\mathrm{safe}},r_t,s_{t+1})$ estimates the action-value function of the filtered closed-loop policy. In contrast, a critic trained only on $u_t^{\mathrm{nom}}$ is generally off-transition unless $u_t^{\mathrm{nom}}=u_t^{\mathrm{safe}}$ almost surely.
\end{proposition}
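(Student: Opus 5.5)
The plan is to make the statement precise as a Bellman fixed-point claim and then argue in two parts. First, fix the filter $\Pi_{\mathrm{KCBF}}$ and the lifting $\psi$. Because the QP solution is unique (Proposition~\ref{prop:qp_convexity}), the map $(z,u^{\mathrm{nom}})\mapsto u^{\mathrm{safe}}$ is a well-defined deterministic function. It is also continuous, since it is the solution of a strictly convex parametric QP, and hence measurable. The filtered policy $\pi_\theta^{\mathrm{safe}}$ is therefore a well-defined Markov kernel: it is the pushforward of $\pi_\theta(\cdot|s)$ under $\Pi_{\mathrm{KCBF}}(z(s),\cdot)$. We assume $z_t=\psi(y_t)$ is a function of $s_t$, or we augment the state to include $z_t$. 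The closed-loop process that generates the replay buffer is then exactly the MDP $\mathcal{M}$ run under $\pi_\theta^{\mathrm{safe}}$. Its action-value function $Q^{\mathrm{safe}}(s,u)=r(s,u)+\gamma\,\mathbb{E}_{s'\sim P(\cdot|s,u)}[V^{\mathrm{safe}}(s')]$ is defined for the executed action $u$.

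Second, we show that the executed-action critic regresses onto the Bellman operator of this policy. The tuple $(s_t,u_t^{\mathrm{safe}},r_t,s_{t+1})$ is distributed with $s_{t+1}\sim P(\cdot|s_t,u_t^{\mathrm{safe}})$ and $r_t=r(s_t,u_t^{\mathrm{safe}})$. The target \eqref{eq:kcbf_sac_target} draws $u_{t+1}^{\mathrm{safe}}$ from $\pi_\theta^{\mathrm{safe}}(\cdot|s_{t+1})$. The conditional expectation of $y_t$ given $(s_t,u_t^{\mathrm{safe}})$ is therefore $(\mathcal{T}^{\mathrm{safe}}\bar Q)(s_t,u_t^{\mathrm{safe}})$, where $\mathcal{T}^{\mathrm{safe}}$ is the soft Bellman operator for the filtered policy. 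The squared loss $\mathcal{L}_Q$ is minimized by this conditional expectation on the support of the buffer distribution. $\mathcal{T}^{\mathrm{safe}}$ is a $\gamma$-contraction in sup-norm, so under the usual idealized assumptions (exact minimization, sufficient coverage, slowly updated targets) the iterates converge to its unique fixed point $Q^{\mathrm{safe}}$. This is the standard policy-evaluation argument of SAC~\cite{haarnoja2018soft}, transplanted to the pushforward policy.

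The entropy term in \eqref{eq:kcbf_sac_target} uses $\log\pi_\theta(u^{\mathrm{nom}}|s)$ rather than the density of $\pi^{\mathrm{safe}}_\theta$. The pushforward may have atoms on the constraint boundary, where the filter clips actions, so its density need not exist. We intend to sidestep this by stating the claim for the soft value in which the entropy bonus is the nominal-policy entropy. This defines a bounded state-dependent reward augmentation and preserves the contraction property. Equivalently, the claim holds exactly for the hard ($\alpha=0$) part.

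For the contrast statement, we write the nominal-action target as a regression of $y_t$ onto $(s_t,u_t^{\mathrm{nom}})$. Its conditional mean is $\mathbb{E}[r(s,u^{\mathrm{safe}})+\gamma V(s')\mid s,u^{\mathrm{nom}}]$, which equals $Q(s,\Pi_{\mathrm{KCBF}}(z,u^{\mathrm{nom}}))$ and not $Q(s,u^{\mathrm{nom}})$. The learned function thus attributes to $u^{\mathrm{nom}}$ the consequences of a different action. It coincides with the transition-consistent value only if $\Pi_{\mathrm{KCBF}}(z,u^{\mathrm{nom}})=u^{\mathrm{nom}}$ almost surely, or in degenerate cases where $Q$ happens to agree on the two actions. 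To show "generally", we will give a simple counterexample: a one-step MDP with reward $r(s,u)=u$ and a filter that clips $u\le 0$. The main obstacle we expect is making "estimates" rigorous without overclaiming. We plan to state the result at the level of population Bellman fixed points and treat function approximation and finite data as out of scope.
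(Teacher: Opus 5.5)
Your proposal is correct and follows the same argument as the paper. The stored successor is drawn from $P(\cdot\mid s_t,u_t^{\mathrm{safe}})$ and the target's next action from the filtered policy, so the regression is on-transition for the Bellman equation of $\pi_\theta^{\mathrm{safe}}$; relabeling the tuple with $u_t^{\mathrm{nom}}$ instead attributes that successor to an action that did not generate it. Your version is more careful than the paper's sketch because it makes explicit three points the paper leaves implicit: $z_t$ must be a function of $s_t$ (or the state augmented) for $\pi_\theta^{\mathrm{safe}}$ to be a Markov policy; the target's entropy term uses the nominal density, so the estimated object is the filtered-policy value with a nominal-entropy bonus; and a nominal-action critic converges to $Q^{\pi^{\mathrm{safe}}}(s,\Pi_{\mathrm{KCBF}}(z,u^{\mathrm{nom}}))$ rather than $Q^{\pi^{\mathrm{safe}}}(s,u^{\mathrm{nom}})$.
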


\begin{proof}
The action-value function for the executed closed-loop policy is
\begin{equation}
    Q^{\pi^{\mathrm{safe}}}(s_t,u_t^{\mathrm{safe}})
    =
    \mathbb{E}
    \left[
    r_t
    +
    \gamma
    V^{\pi^{\mathrm{safe}}}(s_{t+1})
    \mid
    s_t,u_t^{\mathrm{safe}}
    \right]
\end{equation}
where the transition distribution is induced by the physical environment under the executed action $u_t^{\mathrm{safe}}$. Since the replay tuples are generated by applying $u_t^{\mathrm{safe}}$ to the environment, the tuple $(s_t,u_t^{\mathrm{safe}},r_t,s_{t+1})$ is on-transition for this Bellman equation. If instead the critic is trained on $(s_t,u_t^{\mathrm{nom}},r_t,s_{t+1})$ while $u_t^{\mathrm{nom}}\neq u_t^{\mathrm{safe}}$, then the recorded successor state $s_{t+1}$ is not generated by $u_t^{\mathrm{nom}}$. Therefore the tuple is generally inconsistent with the Bellman equation for $Q(s,u_t^{\mathrm{nom}})$.
\end{proof}

\subsection{Safe Policy Improvement over the Feasible Action Set}

For a fixed lifted model and fixed KCBF constraints, define the feasible action set
\begin{equation}
    \mathcal{U}_{K}(z)
    =
    \{u\in\mathcal{U}: a_j^\top u\ge b_j(z),\ j=1,\dots,J\}
\end{equation}
When this set is nonempty and residual coverage holds, actions in $\mathcal{U}_K(z)$ satisfy the robust KCBF condition.

\begin{remark}[Actor regularization toward the feasible set]
\label{rem:actor-reg}
For a fixed lifted model and fixed KCBF constraints, define the
feasible action set
\begin{equation}
\mathcal{U}_K(z) = \{u \in \mathcal{U} :
  a_j^\top u \geq b_j(z),\; j = 1,\ldots,J\}.
\end{equation}
Actions in $\mathcal{U}_K(z)$ satisfy the robust KCBF condition under
residual coverage. The actor-side penalty $\ell_{\text{cbf}}$ in
(\ref{eq:cbf_loss}) is therefore a soft regularizer toward this set:
it does not enforce $\mathrm{supp}(\pi_\theta(\cdot|s)) \subseteq
\mathcal{U}_K(z)$, but reduces the expected CBF gap and the empirical
intervention rate over training. A formal policy-improvement
guarantee restricted to $\mathcal{U}_K(z)$ is not claimed here, since
the SAC soft-greedy update with a quadratic penalty does not reduce
to the constrained Bellman operator; we instead report intervention
rate and slack rate as empirical diagnostics of feasibility
(Section~\ref{sec:experiments}).
\end{remark}

\begin{figure*}[!t]
    \centering
    \includegraphics[width=\textwidth]{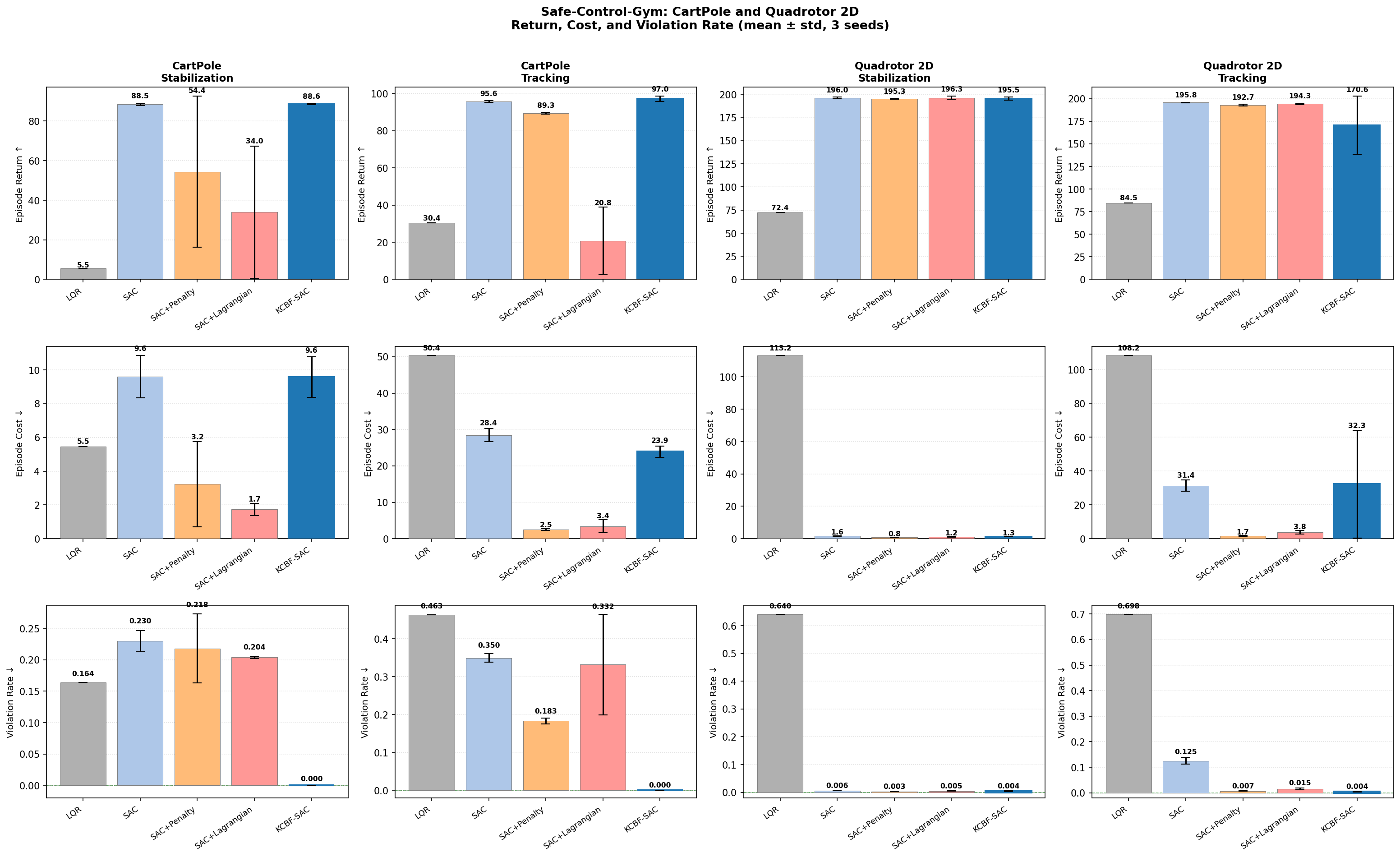}
    \caption{Safe-Control-Gym results: episode return, cost, and violation rate for CartPole (stabilization and tracking) and Quadrotor 2D (stabilization and tracking). KCBF-SAC (dark blue, thick border) achieves zero violations on CartPole and $<0.5\%$ on Quadrotor while matching SAC return. Error bars are mean\,$\pm$\,std over 3 seeds.}
    \label{fig:scg_comparison}
\end{figure*}


\section{Experiments}
\label{sec:experiments}

\subsection{Benchmark Environments}

We evaluate on two complementary benchmark families.
\textit{Safe-Control-Gym}~\cite{yuan2022safe} provides low-dimensional robotic environments with symbolic dynamics, explicit state constraints, and disturbance injection.
We use four tasks: CartPole stabilization and reference tracking, and Quadrotor 2D stabilization and trajectory tracking.
For CartPole, the state is $(p,\theta,\dot{p},\dot{\theta})$ with continuous force input, and the safety constraint is the cart-position bound $p\in[-0.2,0.2]$\,m.
For Quadrotor 2D, the state is $(x,y,\phi,\dot{x},\dot{y},\dot{\phi})$ with two-dimensional thrust input, and the safety constraint is a minimum altitude $y\geq y_{\min}$.

\textit{Safety Gymnasium}~\cite{ji2023safety} provides MuJoCo-based locomotion tasks with safety costs.
We use \texttt{SafetyWalker2d-v1} (17-dim state, 6-dim action) and \texttt{SafetyHalfCheetah-v1} (17-dim state, 6-dim action), both with a forward-velocity safety constraint $v_x\leq v_{\max}$.
These tasks expose the method to contact-rich dynamics where linear Koopman approximations are significantly more challenging.

\subsection{Baselines}

We compare against four baselines:
\textit{LQR}~(Safe-Control-Gym only): a model-based linear controller using symbolic dynamics, providing a reference point without learning;
\textit{SAC}~\cite{haarnoja2018soft}: unconstrained Soft Actor-Critic;
\textit{SAC+Penalty} \cite{lu2022reward} — SAC with a reward penalty for constraint violations (weight $c_{\mathrm{pen}}=1$);
\textit{SAC+Lagrangian}~\cite{achiam2017constrained, kushwaha2026lyapunov}: SAC with an adaptive dual variable enforcing a cumulative cost constraint.

\subsection{Implementation Details}

\subsubsection{Koopman model}
For each environment we collect $N=10{,}000$ transitions from random rollouts.
The observable dictionary consists of the original state coordinates augmented with $M=32$ RBF features with centers placed by $k$-means clustering of the collected data.
The controlled lifted model $z^+=Az+Bu$ is fit by ridge regression with $\lambda_{\mathrm{ridge}}=10^{-4}$.
The projected residual margin $\rho$ is estimated as the 95th percentile of $\{|c_j^\top r_i|\}$ on a held-out calibration set of 2{,}000 transitions, as defined in Section~\ref{sec:theory}.

\subsubsection{Barrier functions}
Barrier design is environment-specific and critical for filter effectiveness.
For CartPole, we use two affine barriers: $h_+(z)=p_{\max}-p$ and $h_-(z)=p-p_{\min}$, which are exactly linear in $z$ since $p$ is included in $\psi$.
For Quadrotor 2D, the altitude constraint $y\geq y_{\min}$ has relative degree 2 with respect to vertical thrust ($c^\top B=0$ for the naïve barrier), making the standard CBF-QP degenerate.
We resolve this by using a composite barrier
\begin{equation}
    h(z) = \alpha(y - y_{\min}) + \beta\,\dot{y}, \quad \alpha,\beta>0,
    \label{eq:composite_barrier}
\end{equation}
which achieves relative degree~1 while encoding both altitude and vertical velocity in the safety margin~\cite{ames2019control}.
For Safety Gymnasium, we use a velocity barrier $h_K(z)=v_{\max}-v_x$, where $v_x$ is included in the original coordinates.

\subsubsection{KCBF-SAC hyperparameters}
The CBF discount is $\eta=0.9$ and the slack weight is $\lambda_\xi=10^4$ for all environments.
The actor CBF penalty weight is $\lambda_h=1.0$.
SAC uses discount $\gamma=0.99$, Polyak parameter $\tau=0.005$, batch size $256$, and learning rate $3\times10^{-4}$.
Training budgets are $150{,}000$ steps for CartPole, $500{,}000$ steps for Quadrotor, and $1{,}000{,}000$ steps for Safety Gymnasium tasks.

\begin{remark}
    Hyperparameter selection is based on \textit{Safe-control-gym} and \textit{Safety-gym} tuned hyperparameters for SAC. 
\end{remark}

\subsection{Evaluation}

All configurations use 3 independent random seeds.
We evaluate for 10 episodes every 5{,}000 environment steps and report means at the final checkpoint.
Primary metrics are: \textit{Episode Return}~($\uparrow$), \textit{Violation Rate} (fraction of steps with $h(x)<0$, $\downarrow$), \textit{Intervention Rate} (fraction of steps where $\|u^{\mathrm{safe}}-u^{\mathrm{nom}}\|_2>\varepsilon$), and \textit{Min-h} (minimum barrier value per episode).
We also report \textit{Slack Rate} (fraction of steps with $\xi>0$) as a direct feasibility diagnostic.


\section{Results, Discussion, and Limitations}
\label{sec:results}

Table~\ref{tab:results} summarizes evaluation performance across all environments and methods. Figure~\ref{fig:scg_comparison} and \ref{fig:gym_comparison} shows the main comparison for \textit{Safe-control-gym} and \textit{Safety-gym}, respectively. We organize the analysis by environment group, followed by an ablation study and a cross-environment analysis of prediction error as the primary predictor of filter effectiveness.

\begin{table*}[!t]
\centering
\caption{Comparison of all methods (mean\,$\pm$\,std over 3 seeds). Best safety (violation rate) per environment is in \textbf{bold}. LQR uses symbolic dynamics and a single run.}
\label{tab:results}
\begin{tabular}{llrrrrr}
\toprule
\normalsize
Environment & Method & Return\,$\uparrow$ & Cost\,$\downarrow$ & Viol.~Rate\,$\downarrow$ & Seeds \\
\midrule
\textbf{CartPole Stab} & KCBF-SAC (ours) & $88.6\pm0.3$ & $9.6\pm1.2$ & $\mathbf{0.000\pm0.000}$ & 3 \\
 & LQR & $5.5$ & $5.5$ & $0.164$ & 1 \\
 & SAC & $88.5\pm0.6$ & $9.6\pm1.3$ & $0.230\pm0.017$ & 3 \\
 & SAC+Lagrangian & $34.0\pm33.4$ & $1.7\pm0.4$ & $0.204\pm0.002$ & 3 \\
 & SAC+Penalty & $54.4\pm38.1$ & $3.2\pm2.5$ & $0.218\pm0.055$ & 3 \\
\midrule
\textbf{CartPole Track} & KCBF-SAC (ours) & $97.0\pm1.5$ & $23.9\pm1.6$ & $\mathbf{0.000\pm0.000}$ & 3 \\
 & LQR & $30.4$ & $50.4$ & $0.463$ & 1 \\
 & SAC & $95.6\pm0.5$ & $28.4\pm1.8$ & $0.350\pm0.012$ & 3 \\
 & SAC+Lagrangian & $20.8\pm18.1$ & $3.4\pm1.8$ & $0.332\pm0.133$ & 3 \\
 & SAC+Penalty & $89.3\pm0.5$ & $2.5\pm0.3$ & $0.183\pm0.008$ & 3 \\
\midrule
\textbf{Quadrotor Stab} & KCBF-SAC (ours) & $195.5\pm1.6$ & $1.3\pm0.2$ & $\mathbf{0.004\pm0.001}$ & 3 \\
 & LQR & $72.4$ & $113.2$ & $0.640$ & 1 \\
 & SAC & $196.0\pm1.0$ & $1.6\pm0.2$ & $0.006\pm0.001$ & 3 \\
 & SAC+Lagrangian & $196.3\pm1.7$ & $1.2\pm0.2$ & $0.005\pm0.001$ & 3 \\
 & SAC+Penalty & $195.3\pm0.5$ & $0.8\pm0.1$ & $0.003\pm0.000$ & 3 \\
\midrule
\textbf{Quadrotor Track} & KCBF-SAC (ours) & $170.6\pm32.1$ & $32.3\pm31.8$ & $\mathbf{0.004\pm0.001}$ & 3 \\
 & LQR & $84.5$ & $108.3$ & $0.698$ & 1 \\
 & SAC & $195.8\pm0.2$ & $31.4\pm3.4$ & $0.125\pm0.014$ & 3 \\
 & SAC+Lagrangian & $194.3\pm0.7$ & $3.8\pm1.1$ & $0.015\pm0.004$ & 3 \\
 & SAC+Penalty & $192.7\pm1.1$ & $1.7\pm0.2$ & $0.007\pm0.001$ & 3 \\
\midrule
\textbf{HalfCheetah} & KCBF-SAC (ours) & $2290\pm22$ & $191\pm24$ & $\mathbf{0.828\pm0.008}$ & 3 \\
 & SAC & $7741\pm550$ & $979\pm1$ & $0.979\pm0.001$ & 3 \\
 & SAC+Lagrangian & $6781\pm1022$ & $977\pm1$ & $0.977\pm0.001$ & 3 \\
 & SAC+Penalty & $7267\pm756$ & $978\pm1$ & $0.978\pm0.001$ & 3 \\
\midrule
\textbf{Walker} & KCBF-SAC (ours) & $2607\pm202$ & $66\pm64$ & $0.370\pm0.226$ & 3 \\
 & SAC & $3803\pm292$ & $778\pm65$ & $0.778\pm0.065$ & 3 \\
 & SAC+Lagrangian & $2801\pm9$ & $33\pm15$ & $\mathbf{0.033\pm0.015}$ & 3 \\
 & SAC+Penalty & $2826\pm200$ & $131\pm64$ & $0.131\pm0.064$ & 3 \\
\bottomrule
\end{tabular}
\end{table*}

\subsection{CartPole: Zero-Violation Safe Control}

On both CartPole tasks, KCBF-SAC achieves a violation rate of exactly $0.000$ across all seeds while matching unconstrained SAC return: $88.6\pm0.3$ (vs.\ SAC $88.5\pm0.6$) on stabilization and $97.0\pm1.5$ (vs.\ SAC $95.6\pm0.5$) on tracking.
The final intervention rate on tracking is $\approx0.02\%$, indicating that the actor has internalized the constraint and the filter rarely needs to correct the nominal action.
The minimum barrier value across training, $h_{\min}\approx1.94$ (stab) and $1.81$ (track), confirms the safe set $\mathcal{C}$ is maintained with positive margin (see Figure~\ref{fig:barrier_h}).

In contrast, the constrained baselines fail to reliably satisfy the position constraint.
SAC+Lagrangian exhibits high return variance ($34.0\pm33.4$ on stabilization), a symptom of oscillating dual variables that overpenalize the policy before the Lagrange multiplier stabilizes.
SAC+Penalty achieves a violation rate of $18.3\%$ on tracking ($47.7\%$ relative reduction vs.\ SAC's $35.0\%$), but this improvement does not enforce pointwise constraints and leaves nearly one in five steps unsafe.

The success on CartPole is explained by the small projected residual margin: $\rho=9\times10^{-4}$.
Position is a direct state coordinate included in $\psi$, so the lifted barrier is exact, and the EDMD model predicts cart dynamics with negligible residual in the barrier direction.
The robust CBF condition~\eqref{eq:robust_kcbf_condition_intro} thus retains a large effective safety margin, and the filter certificate holds without slack activation.

\subsection{Quadrotor 2D: Composite Barrier and Tracking}

On quadrotor stabilization, all learning methods perform comparably (return $\approx195$--$196$), and the differences in violation rate are small.
The more informative result is quadrotor tracking, where KCBF-SAC reduces the violation rate from $12.55\%$ (SAC) to $0.40\%$, a \textbf{96.8\% reduction}. KCBF-SAC return is $170.6\pm32.1$ vs.\ unconstrained SAC $195.8\pm0.2$; the return variance is discussed below.

This result depends critically on the composite barrier~\eqref{eq:composite_barrier}, which is essentially a higher order CBF or HOCBF\cite{xiao2021high}.
The altitude constraint $y\geq y_{\min}$ has relative degree 2 with respect to vertical thrust: the standard lifted barrier gives $c^\top B\approx0$, making the CBF-QP degenerate.
Without the composite design, the effective control authority $\|a_{\mathrm{cbf}}\|$ collapses to $5.47\times10^{-5}$; the composite barrier restores it to $2.21\times10^{-3}$ (40$\times$ improvement), enabling the QP to produce meaningful corrections.

The return variance on tracking ($170.6\pm32.1$) reflects episodic slack activations during aggressive maneuvers when the QP becomes infeasible under actuator limits.
When slack is required, the certificate is formally weakened, but the average violation rate remains below $0.4\%$.
The residual margin is $\rho=1.3\times10^{-3}$, comparable to CartPole, confirming that Quadrotor dynamics are well-captured by EDMD once the barrier has the correct relative degree.

\begin{figure}[htpb]
    \centering
    \includegraphics[width=0.49\textwidth]{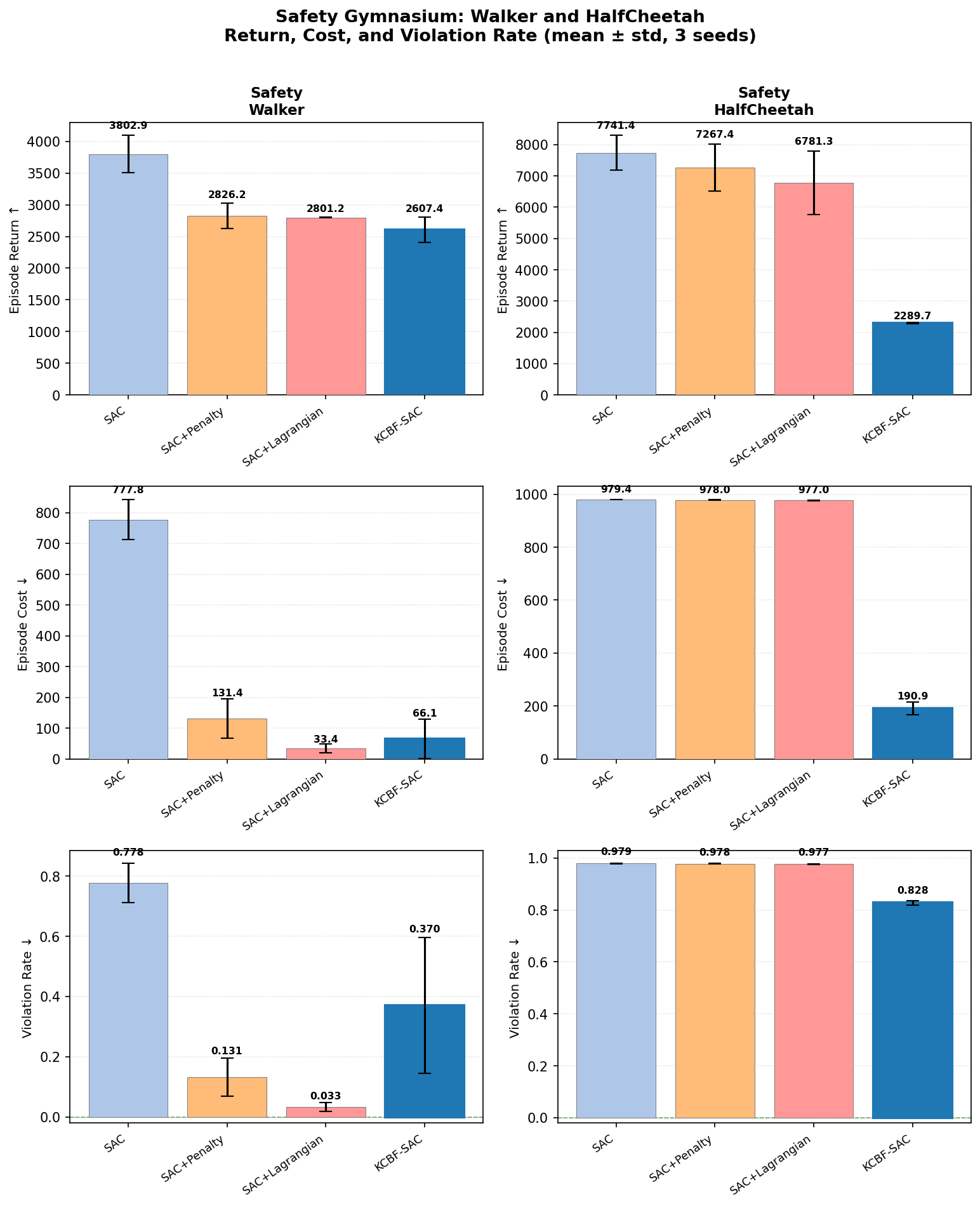}
    \caption{Safety Gymnasium results: episode return, cost, and violation rate for Walker and HalfCheetah. All methods fail to enforce the velocity constraint reliably; SAC+Lagrangian achieves the lowest violation on Walker (3.34\%) while KCBF-SAC is best on neither task due to large projected residual $\rho$. Error bars are mean\,$\pm$\,std over 3 seeds.}
    \label{fig:gym_comparison}
\end{figure}

\subsection{Safety Gymnasium: Exposing Structural Limitations}

\textbf{Walker}
KCBF-SAC reduces violation from $77.8\%$ (SAC) to $37.0\%$ while achieving return $2607\pm202$.
However, SAC+Lagrangian dominates on safety ($3.34\%$ violation) at comparable return ($2801\pm9$, lower variance), demonstrating that a distributional constraint method can outperform a step-wise CBF filter when the filter's structural assumptions are not met.

The high violation rate and large variance across seeds reflect two failure modes.
First, the projected residual $\rho=0.698$ is large relative to the typical barrier value $h_K(z_t)$.
When $h_K(z_t)<0$ (agent already unsafe) and $(1-\eta)|h_K(z_t)|>\rho$, the RHS of the robust CBF condition becomes negative and is trivially satisfied by any control input, disabling the filter entirely.
Even when $h_K(z_t)>0$, a large $\rho$ forces the predicted next barrier to exceed $\rho$, which may be infeasible under actuator limits and triggers slack activation.
Second, $h_{\min}=-7.81$ confirms that the agent regularly enters the unsafe region; the filter cannot provide a certificate once $h_K(z)<0$ (Figure~\ref{fig:barrier_h}).

\textbf{HalfCheetah}
KCBF-SAC reduces violations from $97.9\%$ (SAC) to $82.8\%$, but at a severe return cost ($7741\pm550\to2290\pm22$, $-70\%$).
The minimum barrier value $h_{\min}=-18.82$ indicates that the velocity constraint is deeply violated throughout training.
The residual margin $\rho=1.782$ exceeds typical barrier values at every step, making the CBF constraint trivially satisfied and the filter effectively inactive except in rare states with large $h_K$ headroom.

The fundamental cause is that ground-contact impacts produce sharp, non-smooth velocity transitions that the linear EDMD model cannot capture with $M=32$ RBFs; for the RBF dictionary class used here, $\rho$ does not decrease meaningfully with dictionary size in our experiments, though richer representations (e.g., deep Koopman lifting \cite{shi2022deep}) may help.
The return penalty arises because the CBF penalty term $\ell_{\mathrm{cbf}}$ in the actor loss~\eqref{eq:actor_loss} still discourages high-velocity actions even when the QP correction is negligible.

\begin{figure}[htpb]
    \centering
    \includegraphics[width=\columnwidth]{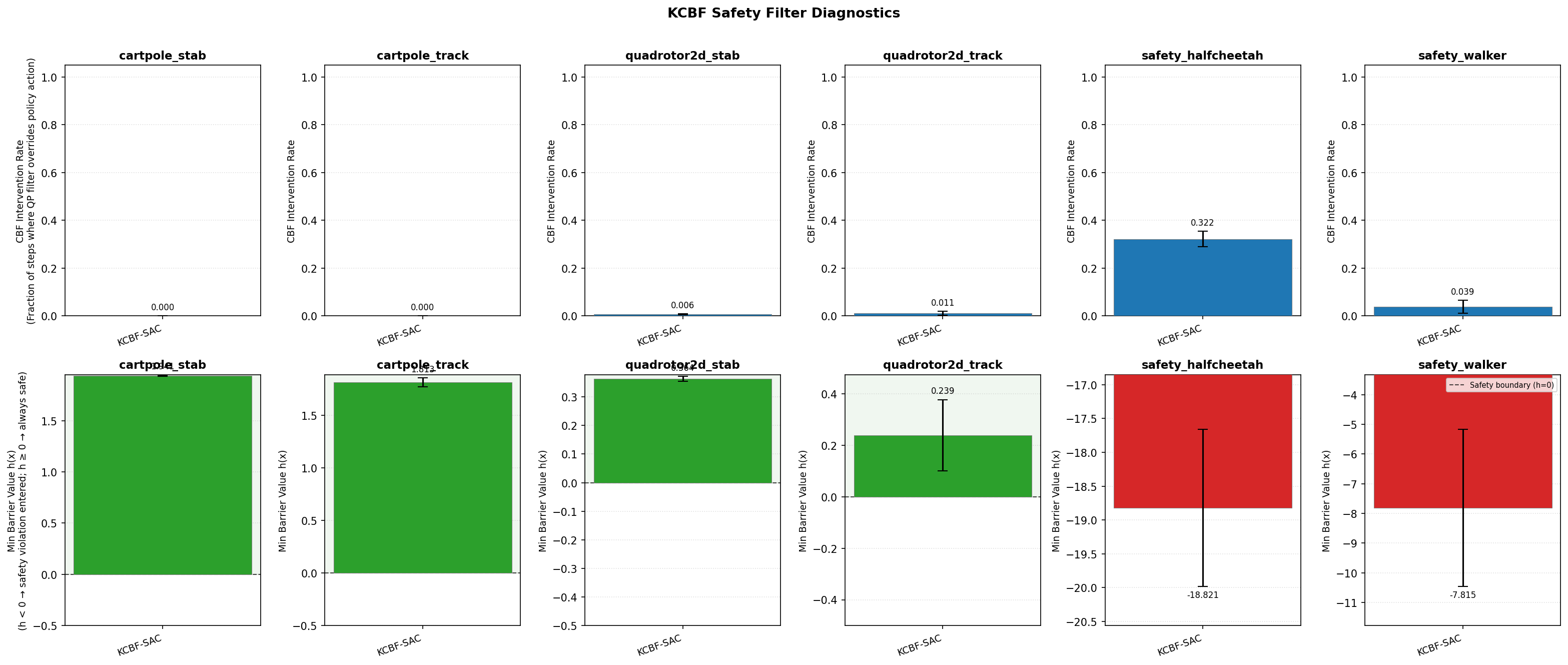}
    \caption{KCBF-SAC diagnostics per environment: intervention rate, slack rate, and minimum barrier value $h_{\min}$. CartPole and Quadrotor show near-zero slack and positive $h_{\min}$; Walker and HalfCheetah show high slack rates and negative $h_{\min}$, indicating loss of safety certificate.}
    \label{fig:diagnostics}
\end{figure}

\subsection{Barrier Value Evolution During Training}

Figure~\ref{fig:barrier_h} tracks the minimum barrier value $h_{\min}$ per episode over the course of training, averaged across 3 seeds, for all six environments under KCBF-SAC.

For CartPole (stabilization and tracking), $h_{\min}$ rises rapidly above zero within the first 20{,}000 steps and remains positive throughout training, confirming that the safe set $\mathcal{C}=\{z:h_K(z)\geq0\}$ is maintained from early in learning.
The smooth convergence is consistent with the small residual margin ($\rho\approx10^{-3}$) such that once the filter corrects the actor's actions, the barrier is not threatened again.

For Quadrotor 2D (both tasks), $h_{\min}$ is positive and stable throughout training, reaching $h_{\min}\approx0.24$--$0.36$, which confirms that the composite barrier~\eqref{eq:composite_barrier} provides effective altitude protection.
The occasional dips during tracking reflect transient slack activations during aggressive maneuvers, but $h_{\min}$ recovers.

For Safety Walker and HalfCheetah, $h_{\min}$ remains deeply negative throughout training ($h_{\min}\approx-7.8$ for Walker, $h_{\min}\approx-18.8$ for HalfCheetah), indicating that the KCBF filter never succeeds in enforcing the velocity constraint.
The barrier value does not improve over training, which corroborates the interpretation that the filter is structurally inactive (large $\rho$, trivially satisfied constraints) rather than transiently violated.
This training-time view complements the static $\rho$ diagnostic: both predict the same dichotomy between CartPole/Quadrotor (filter effective) and Gymnasium tasks (filter inactive).

\begin{figure*}[!t]
    \centering
    \includegraphics[width=\textwidth]{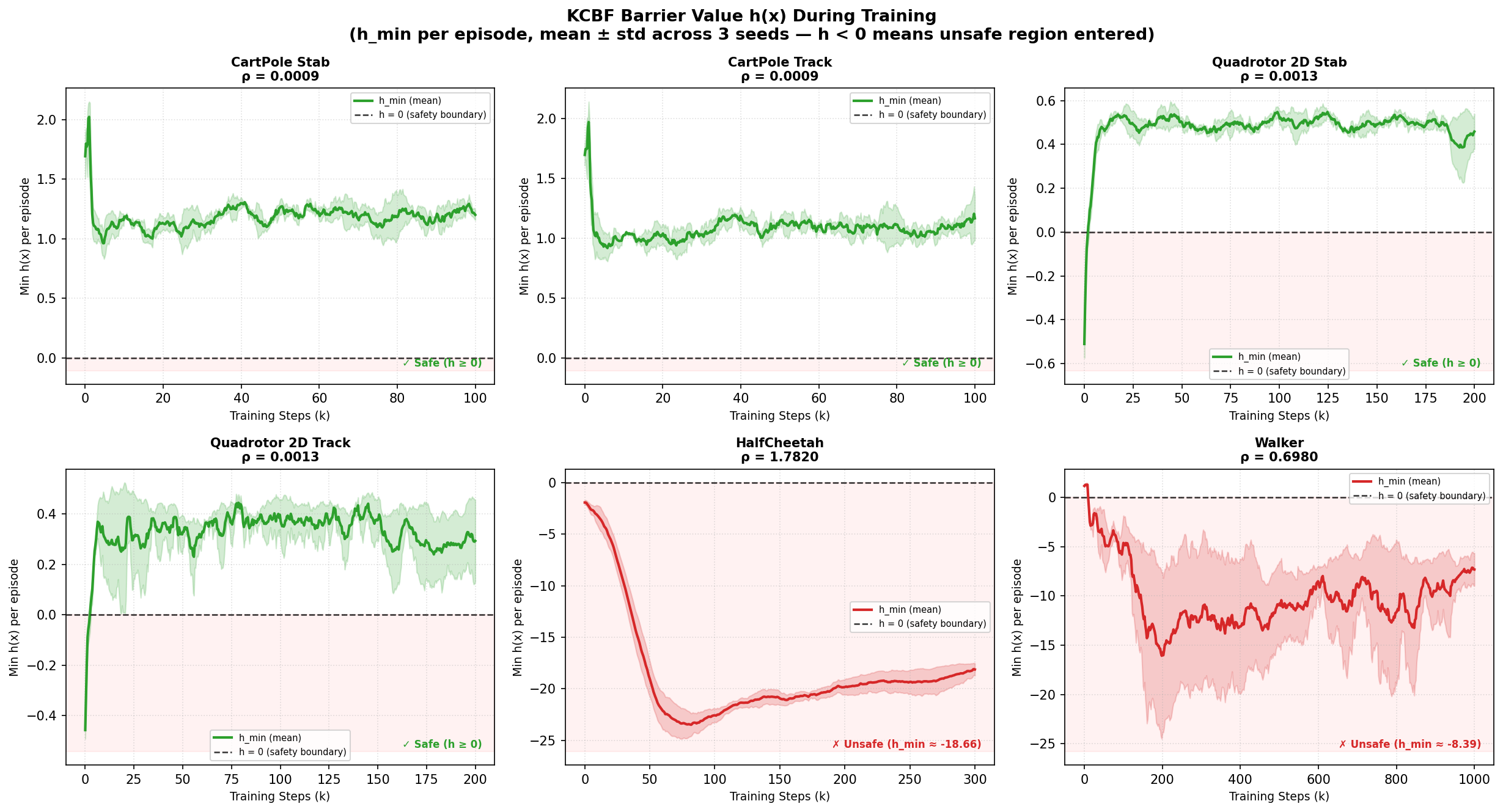}
    \caption{KCBF barrier value $h_{\min}$ per episode during training (mean\,$\pm$\,std, 3 seeds). CartPole and Quadrotor (green) maintain $h_{\min}>0$ from early training; Walker and HalfCheetah (red) remain at $h_{\min}\ll0$ throughout, confirming the filter is structurally inactive for these environments. Dashed line marks the safety boundary $h=0$.}
    \label{fig:barrier_h}
\end{figure*}

\subsection{Ablation: CBF Discount parameter on Walker}

We ablate the CBF discount $\eta$ and slack weight $\lambda_\xi$ on Safety Walker, fixing $\lambda_\xi=10{,}000$ and varying $\eta\in\{0.5,0.7,0.9\}$.
Configurations $\eta=0.7$ and $\eta=0.5$ were trained for 1M steps with a single seed; Figure~\ref{fig:ablation} shows convergence alongside the default ($\eta=0.9$, 3 seeds) and SAC+Lagrangian.

At 1M steps, the three configurations reveal qualitatively distinct failure modes.
Default $\eta=0.9$ achieves the best balance: return $2607\pm202$ and violation $37.0\%$.
\textit{Policy collapse} ($\eta=0.7$): the tighter CBF constraint causes the QP to override the actor at $85\%$ of steps; the actor fails to learn effective locomotion and produces a bimodal policy, i.e., some episodes maintain $h_K>0$ (short, conservative runs) while others fail catastrophically. Yielding a mean return of $\approx26$ and a misleadingly low violation rate of $6.7\%$ that reflects an inactive policy rather than genuine safety.
\textit{Constraint infeasibility} ($\eta=0.5$): the tighter constraint becomes infeasible under actuator limits at most steps, causing pervasive slack activation and an effective intervention rate of only $6.2\%$. The actor learns freely and achieves return $3405$, but with $84.5\%$ violations.

Among the three tested values, $\eta=0.9$ is the only stable operating point for Walker's residual regime; higher values of $\eta$ were not evaluated.
Lower $\eta$ tightens the barrier, but when $\rho$ is already large, the constraint either overrides the actor into collapse or becomes infeasible and is bypassed entirely.
Early-training snapshots (150k steps) were misleading in the sense that Walker's policy had not yet learned fast locomotion, so all configurations appeared safe and divergence emerges only after the agent learns high-velocity gaits.

\begin{figure*}[htpb]
    \centering
    \includegraphics[width=\textwidth]{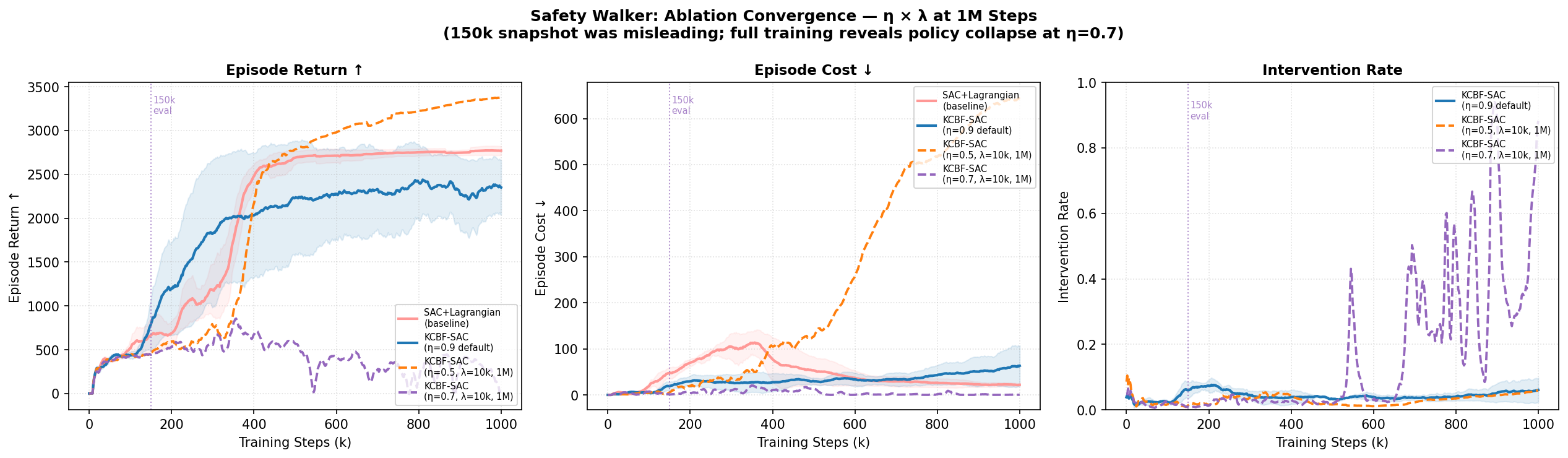}
    \caption{Safety Walker ablation: return, cost, and intervention rate over 1M training steps. $\eta=0.7$ ($\lambda_\xi=10{,}000$) suffers policy collapse (mean return $\approx26$, 85\% intervention rate, bimodal per-episode behavior). $\eta=0.5$ achieves high return (3405) but near-total safety failure (84.5\% violations) via pervasive slack activation. Among tested values, $\eta=0.9$ is the only stable operating point for Walker's residual regime ($\rho=0.698$). Single seed for ablation variants; 3 seeds (shaded) for baselines.}
    \label{fig:ablation}
\end{figure*}

\subsection{Koopman Residual parameter as a Filter-Effectiveness Predictor}

\begin{figure*}[htpb]
    \centering
    \includegraphics[width=\textwidth]{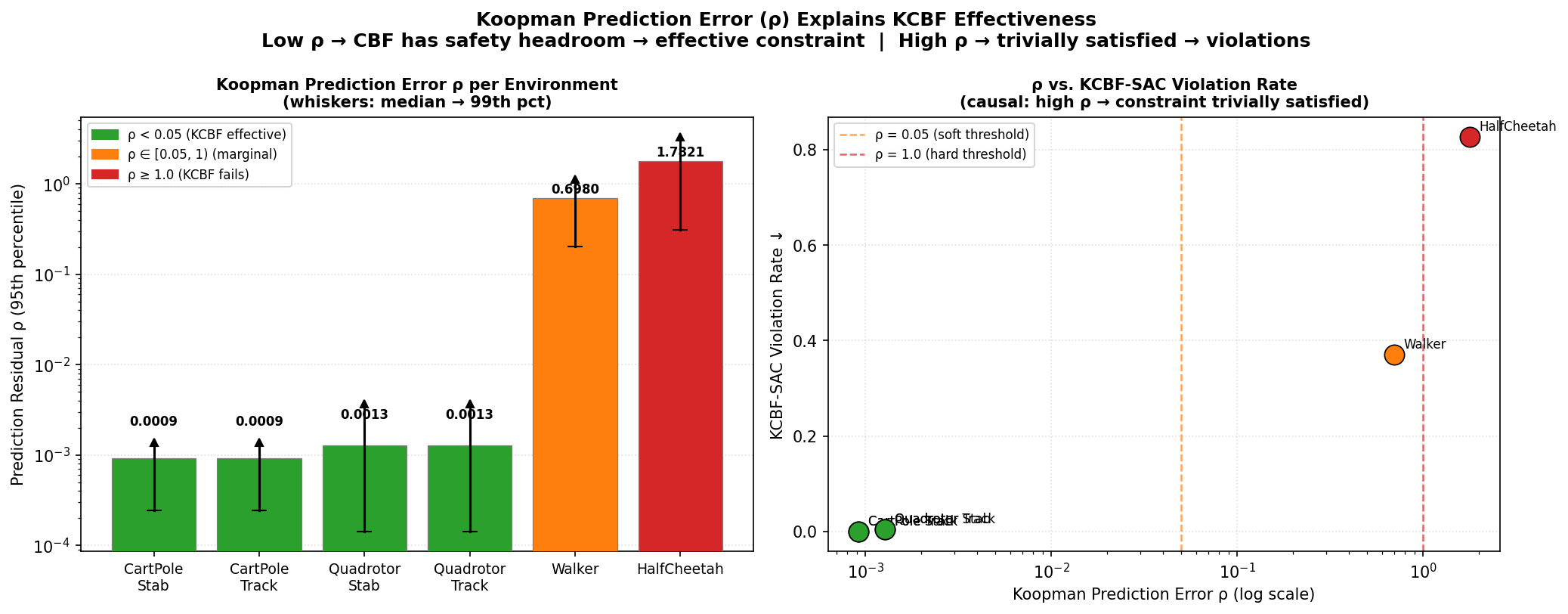}
    \caption{Left: projected residual margin $\rho$ (95th-percentile $|c^\top r_i|$) per environment on log scale; whiskers show median-to-99th-percentile spread. Right: $\rho$ vs.\ KCBF-SAC violation rate, revealing a near-monotone relationship spanning more than three orders of magnitude ($\rho=9\times10^{-4}$ for CartPole to $\rho=1.78$ for HalfCheetah).}
    \label{fig:rho}
\end{figure*}

Figure~\ref{fig:rho} reveals a clear relationship between the projected residual $\rho$ and KCBF-SAC effectiveness.
CartPole ($\rho=9\times10^{-4}$, violation $0\%$) and Quadrotor ($\rho=1.3\times10^{-3}$, violation $0.4\%$) lie in a regime where the robust CBF constraint retains meaningful headroom.
Walker ($\rho=0.698$) and HalfCheetah ($\rho=1.782$) lie in a regime where $\rho$ alone saturates the typical barrier value, making the constraint trivially satisfied.

This relationship follows directly from the structure of the robust CBF condition.
The constraint RHS is $(1-\eta)h_K(z_t)+\rho$.
There are two distinct failure regimes.
When $h_K(z_t)>0$ but $\rho$ is large, the predicted next barrier must exceed $(1-\eta)h_K(z_t)+\rho\geq\rho$; satisfying this requirement may be infeasible under actuator limits, causing slack activation and certificate loss.
When $h_K(z_t)<0$ (agent already unsafe) and $(1-\eta)|h_K(z_t)|>\rho$, the RHS is negative and every control input trivially satisfies the constraint, completely disabling the filter.
Both regimes are active on Walker and HalfCheetah, as confirmed by the large slack rates and deeply negative $h_{\min}$ in Figure~\ref{fig:diagnostics}.
In practice, $\rho$ is computable before training as a static EDMD quality diagnostic; both Walker ($\rho=0.698$) and HalfCheetah ($\rho=1.782$) exhibit these failure modes, while CartPole and Quadrotor ($\rho\leq10^{-3}$) do not.

\subsection{Limitations}

The experiments expose six structural limitations of the proposed method.

\textbf{Relative degree}:
The CBF-QP requires the lifted barrier to have relative degree 1 with respect to the action ($c^\top B\neq 0$).
The composite barrier~\eqref{eq:composite_barrier} resolves the altitude constraint for Quadrotor 2D, but it is a case-by-case fix.
Discrete-time CBFs~\cite{agrawal2017discrete} for higher-relative-degree systems offer a systematic extension.

\textbf{Irreducible prediction error}:
When contact dynamics produce non-smooth state transitions (HalfCheetah, $\rho=1.782$), no practical dictionary size $M$ reduces $\rho$ to a regime where the filter is effective.
Extensions to multi-step Koopman prediction or locally nonlinear lifting are needed.

\textbf{Policy collapse under over-constraint}:
Tight CBF parameters ($\eta=0.7$, Walker) cause the QP to dominate actor updates, producing a degenerate policy.
This failure mode is absent from Lagrangian methods, which enforce constraints through a differentiable dual objective rather than hard action correction.

\textbf{One-step certificate}:
The CBF condition is enforced step-wise; no multi-step planning horizon is used.
Trajectories can accumulate in regions where $h_K$ is small even if each individual step satisfies the constraint.

\textbf{Static barrier design}:
Barrier functions are hand-specified per environment.
Joint learning of the Koopman lifting and the barrier certificate~\cite{zinage2023neural} would improve generality and reduce design effort.

\textbf{Slack degrades the certificate}:
When the QP is infeasible without slack ($\xi>0$), the filter no longer enforces a formal CBF condition.
Slack rate should be monitored as a safety diagnostic; environments with high slack rate (see Figure~\ref{fig:diagnostics}) do not provide formal step-wise guarantees even when violations appear low.

\section{Conclusion}
\label{sec:conclusion}

We introduced Robust Koopman-CBF SAC, a safety-filtered actor-critic framework that lifts nonlinear dynamics to an approximately linear Koopman predictor, enforces an affine CBF constraint through a lightweight QP safety filter, and accounts for finite-dimensional approximation error via a projected residual margin $\rho$.
The framework integrates with SAC through executed-action critic learning and an actor-side CBF feasibility penalty, eliminating the policy-filter mismatch that arises when safety corrections are applied post-hoc to an otherwise unconstrained actor.

On CartPole stabilization and tracking, the method achieves zero constraint violations across all seeds while matching unconstrained SAC returns, demonstrating that a Koopman-CBF filter can provide formal pointwise safety with no performance cost when the lifted model is accurate and the barrier has the correct relative degree.
The 96.8\% violation reduction on Quadrotor tracking further shows that structural barrier design (specifically, compositing an RD-2 altitude constraint into an RD-1 form) is as important as model quality.
Together these results confirm the method's promise for low-to-medium-dimensional robotic control tasks where EDMD quality is high.

On Safety Gymnasium locomotion, the results are more nuanced.
KCBF-SAC reduces violations on Walker (from $77.8\%$ to $37.0\%$) but is outperformed by SAC+Lagrangian on safety ($3.34\%$), which is notable because the Lagrangian baseline enforces only an expected cumulative cost constraint rather than a step-wise barrier condition.
This counter-intuitive outcome reflects a fundamental limitation of the first-order Koopman-CBF approach: with $\rho=0.698$, contact dynamics dominate prediction error and the robust CBF condition alternates between trivially satisfied (when $h_K<0$) and infeasible under actuator limits (when $h_K>0$).
The projected residual $\rho$, measurable before training, serves as a reliable diagnostic that predicts filter effectiveness across more than three orders of magnitude, from CartPole ($\rho\approx10^{-3}$, zero violations) to HalfCheetah ($\rho=1.782$, $82.8\%$ violations).

These findings motivate several extensions.
Discrete-time Koopman-CBFs for higher-relative-degree systems~\cite{agrawal2017discrete} can address relative-degree limitations without case-by-case composite barrier construction.
Multi-step Koopman prediction horizons can reduce the gap between the 1-step certificate and multi-step trajectory safety.
Online Koopman model adaptation would allow $\rho$ to decrease as the agent explores safer regions of the state space.
Joint learning of the lifting map and barrier certificate~\cite{zinage2023neural} would remove the dependence on hand-designed barriers, and conformal calibration~\cite{angelopoulos2023conformal} offers a principled approach to tightening $\rho$ with distribution-free coverage guarantees.
Together these directions point toward Koopman-CBF safety filters that remain effective in the contact-rich, high-dimensional settings where current first-order approaches reach their structural limits.

\bibliographystyle{IEEEtran}
\bibliography{References}

\end{document}